\renewcommand{\ref}{\cref}
\theoremstyle{plain}
\newtheorem{thm}{Theorem}[section]
\newtheorem{prop}[thm]{Proposition}
\newtheorem{lemma}[thm]{Lemma}
\newtheorem{cor}[thm]{Corollary}
\theoremstyle{definition}
\newtheorem{defn}[thm]{Definition}
\newtheorem{rmk}[thm]{Remark}
\newtheorem{example}[thm]{Example}
\newtheorem{note}[thm]{Notation}
\numberwithin{equation}{section}
\newcommand{\N}{\mathbb N}
\newcommand{\R}{\mathbb R}
\newcommand{\C}{\mathbb C}
\newcommand{\E}{\mathbb E}  
\renewcommand{\O}[1]{\mathbf O_{#1}} 
\newcommand{\mc}[1]{\mathcal{#1}}
\newcommand{\mf}[1]{\mathfrak{#1}}
\newcommand{\eps}{\varepsilon}
\DeclareMathOperator{\tr}{\mathrm{tr}}
\DeclareMathOperator{\Tr}{\mathrm{Tr}}
\DeclareMathOperator{\rank}{\mathrm{rank}}
\DeclareMathOperator{\diag}{\mathrm{diag}}
\DeclareMathOperator{\Normal}{\mathcal{N}}
\newcommand{\To}{\Rightarrow}
\newcommand{\pp}[2]{\frac{\partial #1}{\partial #2}}
\newcommand{\abs}[1]{|#1|}
\newcommand{\pre}{h}
\newcommand{\post}{x}
\newcommand{\act}{\varphi}
\newcommand{\Hmltn}{H}
\newcommand{\embedding}[1]{ \begin{pmatrix} #1 & 0 \\ 0 & 0 \end{pmatrix}}
\newcommand{\entries}{\sim^\mathrm{entries}}
\tikzset{>=latex}
\tikzstyle{plate caption} = [caption, node distance=0, inner sep=0pt,
\tikzset{circ/.style={
    circle,
    draw=black,
    fill=white,
    inner sep=0pt,
    minimum size=1cm,
    }
}
\tikzset{box/.style={
    rectangle,
    draw=black,
    fill=white,
    inner sep=0pt,
    minimum size=1cm,
    }
}
\tikzset{empty style/.style={
    circle,
    draw=white,
    fill=white,
    inner sep=0pt,
    minimum size=0.6cm
    }
}
\tikzset{mark style/.style={
    mark size=1pt,
    color=black
    }
}    
\tikzset{text style/.style={
    text width=0.5cm
    }
}
\title{ Asymptotic Freeness of Layerwise Jacobians Caused by Invariance 
of Multilayer Perceptron: \\ The Haar Orthogonal Case}
\author{
Beno\^\i{}t Collins$^*$, Tomohiro Hayase$^{**}$
}
\date{\vspace{-5ex}} 
\begin{document}

\footnotetext{$^*$ Kyoto University}
\footnotetext{$^{**}$ Fujitsu Laboratories, hayase.tomohiro@fujitsu.com, hayafluss@gmail.com}
\footnotetext{The authors are equally contributed.}

\maketitle

\begin{abstract}
Free Probability Theory (FPT) provides 
rich knowledge for handling mathematical difficulties caused by random matrices appearing in research related to deep neural networks (DNNs), such as the dynamical isometry, Fisher information matrix, and training dynamics.  
FPT suits these researches because the DNN's parameter-Jacobian and input-Jacobian are polynomials of layerwise Jacobians.
However, the critical assumption of asymptotic freeness of the layerwise Jacobian has not been proven mathematically so far. 
The asymptotic freeness assumption plays a  fundamental role 
when propagating
spectral distributions through the layers.
Haar distributed orthogonal matrices are essential for achieving dynamical
isometry. In this work, we prove asymptotic freeness of layerwise Jacobians of
multilayer perceptron (MLP) in this case.
A key of the proof is an invariance of the MLP. Considering the orthogonal matrices that fix the hidden units in each layer, we replace each layer's parameter matrix with itself multiplied by the orthogonal matrix, and then the MLP does not change. Furthermore, if the original weights are Haar orthogonal, the Jacobian is also unchanged by this replacement. Lastly, we can replace each weight with a Haar orthogonal random matrix independent of the Jacobian of the activation function using this key fact. 
\end{abstract}
\tableofcontents
\section{Introduction}
Free Probability Theory (FPT) provides essential insight when handling mathematical difficulties caused by random matrices that appear in deep neural networks (DNNs)  \cite{Pennington2018emergence, Hanin2019products, Hayase2020spectrum}. 
The DNNs have been successfully used to achieve empirically high performance in various machine learning tasks \cite{LeCun2015DeepLearning, Goodfellow2016DeepLearning}.
However, their understanding at a theoretical level is limited, and their success relies heavily on heuristic search settings such as architecture and hyperparameters.
To understand and improve the training of DNNs, researchers have developed several theories to investigate, for example, the vanishing/exploding gradient problem \cite{Schoenholz2016DeepPropagation}, the shape of the loss landscape \cite{Pennington2018spectrum, Karakida2019universal}, and the global convergence of training and generalization \cite{Jacot2018NeuralNetworks}.
The nonlinearity of activation functions, the depth of DNN, and
the lack of commutation of random matrices result in significant mathematical
challenges. In this respect, FPT, invented by Voiculescu \cite{voiculescu1985symmetries, Voiculescu1987multiplication, Voiculescu1991limit},  is well suited for this kind of analysis.

FPT essentially appears in the analysis of the dynamical isometry  \cite{Pennington2017resurrecting, Pennington2018emergence}.
It is well known that reducing the training error in very deep models is difficult without carefully preventing the gradient's vanishing/exploding. 
Naive settings (i.e., activation function and initialization) cause vanishing/exploding gradients, as long as the network is relatively deep.
The dynamical isometry \cite{Saxe2014exact,Pennington2018emergence} was proposed to solve this problem. The dynamical isometry can facilitate training by setting the input-output Jacobian's singular values to be one, where the input-output Jacobian is the Jacobian matrix of the DNN at a given input.  
Experiments have shown that with initial values and models satisfying dynamical isometry, very deep models can be trained without gradient vanishing/exploding;
\cite{Pennington2018emergence, Xiao2018dynamical, Sokol2018information} have found that DNNs achieve approximately dynamical isometry over random orthogonal weights, but they do not do so over random Gaussian weights. 
For the sake of the prospect of the theory, let $J$ be the Jacobian of the multilayer perceptron (MLP),
which is the fundamental model of DNNs.
The Jacobian $J$ is given by the product of layerwise Jacobians:
\begin{align}
    J = D_L W_L \dots  D_1 W_1,
\end{align}
where each $W_\ell$ is $\ell$-th weight matrix,  each $D_\ell$ is Jacobian of $\ell$-th activation function, and $L$ is the number of layers.
Under an assumption of asymptotic freeness, the limit spectral distribution is given by \cite{Pennington2018emergence}.

To examine the training dynamics of MLP achieving the dynamical isometry, \cite{Hayase2020spectrum} introduced a spectral analysis of the Fisher information matrix per sample of MLP.
The Fisher information matrix (FIM) has been a fundamental quantity for such theoretical understandings. The FIM describes the local metric of the loss surface concerning the KL-divergence function \cite{Amari2016}.  The neural tangent kernel  \cite{Jacot2018NeuralNetworks}, which has the same eigenvalue spectrum except for trivial zero as FIM, also describes the learning dynamics of DNNs when the dimension of the last layer is relatively smaller than the hidden layer. 
In particular, the FIM's eigenvalue spectrum describes the efficiency of optimization methods. 
For instance, the maximum eigenvalue determines an appropriate size of the learning rate of the first-order gradient method for convergence \cite{Cun1991eigenvalues, Karakida2019universal, Wu2019how}.
Despite its importance in neural networks,  the FIM spectrum has been the object of only very little study from a theoretical perspective.
The reason is that it was limited to random matrix theory for shallow networks \cite{Pennington2018spectrum} or mean-field theory for eigenvalue bounds, which may be loose in general \cite{Karakida2019normalization}.  
Thus,  \cite{Hayase2020spectrum} focused on the FIM per sample and found an alternative approach applicable to DNNs.
The FIM per sample is equal to $J_\theta^\top J_\theta$, where $J_\theta$ is the parameter Jacobian.
Also, the eigenvalues of the FIM per sample are equal to the eigenvalues of the $H_L$ defined recursively as follows, except for the trivial zero eigenvalues and normalization:
\begin{align}\label{align:reccurence_matrix}
\Hmltn_{\ell +1 } =   \hat{q}_{\ell}I + W_{\ell+1} D_{\ell}\Hmltn_{\ell}D_{\ell}W_{\ell+1}^\top, \ \ell=1, \dots, L-1,
\end{align}
where $I$ is the identity matrix, and $\hat{q}_\ell$ is the empirical variance of $\ell$-th hidden unit.
Under an asymptotic freeness assumption, \cite{Hayase2020spectrum} gave some limit spectral distributions of $H_L$.

The asymptotic freeness assumptions have a critical role in these researches \cite{Pennington2018emergence, Hayase2020spectrum} to obtain the propagation of spectral distributions through the layers.
However, the proof of the asymptotic freeness was not completed.
In the present work, we prove the asymptotic freeness of layerwise Jacobian of multilayer perceptrons with Haar orthogonal weights.

\subsection{Main Results}
Our results are as follows. Firstly, the following $L+1$ tuple  of families are asymptotically free almost surely (see \cref{thm:main}):
\begin{align}
 (   (W_1, W_1^*),\dots, (W_L,W_L^*) , (D_1, \dots, D_L)  ).
\end{align}
Secondly, for each $\ell=1, \dots, L-1$, the following pair  is almost surely asymptotically free (see \cref{prop:input-jacobian}):
\begin{align}
      W_{\ell+1} J_\ell J_\ell^*  W_{\ell+1} , D_\ell^2.
\end{align}
The asymptotic freeness is at the heart of the spectral analysis of the Jacobian.
Lastly, for each $\ell=1, \dots, L-1$, the following pair is almost surely asymptotically free (see \cref{prop:parameter-jacobian}):
\begin{align}
    H_\ell , D_\ell^2.
\end{align}
The asymptotic freeness of the pair is the key to the analysis of the conditional Fisher information matrix.

The fact that each parameter matrix $W_\ell$ contains elements correlated with the activation's Jacobian matrix $D_\ell$ is a hurdle towards showing asymptotic freeness.
Therefore, among the components of $W_\ell$, we move the elements that appear in $D_\ell$ to the $N$-th row or column. This is achieved by changing the basis of $W_\ell$.
The orthogonal matrix \eqref{align:Y_defn} that defines the change of basis can be realized so that each hidden layer is fixed, and as a result, the MLP does not change. Then, the dependency between $W_\ell$ and $D_\ell$ is only in the $N$-th row or column, so it can be ignored by taking the limit of $N \to \infty$. From this result, we can say that $(W_\ell, W_\ell^\top)$ and $D_\ell$ are asymptotically free for each $\ell$. However, this is still not enough to prove the asymptotical freeness between families $(W_\ell, W_\ell^\top)_{\ell=1, \dots, L}$ and $(D_\ell)_{\ell=1, \dots, L}$. Therefore, we complete the proof of the asymptotic freeness by additionally considering another change of basis \eqref{align:practical_U} that rotates the $N-1 \times N-1$ submatrix of each $W_\ell$  by independent Haar orthogonal matrices.
A key of the desired asymptotic freeness is the invariance of MLP described in \cref{lemma:Ux}.  The invariance follows from a structural property of MLP and an invariance property of Haar orthogonal random matrices. The  invariance of MLP helps us apply the asymptotical freeness of Haar orthogonal random matrices \cite{collins2006integration} to our situation.

\subsection{Related Works}

The asymptotic freeness is weaker than the assumption of the forward-backward independence that research of dynamical isometry assumed  \cite{Pennington2017resurrecting,Pennington2018emergence,Karakida2019universal}. 
Although studies of mean-field theory \cite{Saxe2014exact, LeCun2015DeepLearning,Gilboa2019dynamical}  succeeded in explaining many experimental deep learning results, they use an artificial assumption (gradient independence \cite{Yang2019scaling}), which is not rigorously true.
Asymptotic freeness is weaker than this artificial assumption. 
Our work clarifies that asymptotic free independence is just the right property that is useful and strictly valid for analysis.

Several works prove or treat the asymptotic freeness with Gaussian initialization \cite{Hanin2019products, Yang2019scaling, Yang2020tensor, Pastur2020random}.
However, asymptotic freeness was not proven for the orthogonal initialization.  As dynamical isometry can be achieved under orthogonal initialization but cannot be done under Gaussian initialization \cite{Pennington2018emergence},  proof of the asymptotic freeness in orthogonal initialization is essential.
Since our proof makes crucial use of the properties of Haar distributed random matrices, the proof is clear because we only need to aim to replace the weights with Haar orthogonal, which is independent of the other Jacobians.
While \cite{Hanin2019products} restricting the activation function to ReLU, our proof covers a comprehensive class of activation functions, including smooth functions.

\subsection{Organization of the Paper}
\cref{sec:preliminaries} is devoted to preliminaries. It contains settings of MLP and notations about random matrices, spectral distribution, and free probability theory.
\cref{sec:keys} consists of two keys to prove main results. A key is the invariance of MLP, and the other is to cut off a dimension.
\cref{sec:main} is devoted to proving the main results on the asymptotic freeness. 
In \cref{sec:appl}, we show applications of the asymptotic freeness to spectral analysis of random matrices, which appear in the theory of dynamical isometry and training dynamics of DNNs.
\cref{sec:discussion} is devoted to the discussion and future works.
\section{Preliminaries}\label{sec:preliminaries}
\subsection{Setting of MLP}\label{ssec:setting}
We consider multilayer perceptron settings, as usual in the studies of FIM \cite{Pennington2018spectrum, Karakida2019universal} and dynamical isometry \cite{Saxe2014exact, Pennington2018emergence, Hayase2020spectrum}. 
Fix $L,N \in \N$. We consider an $L$-layer multilayer perceptron as a parametrized map $f=(f_\theta \mid \theta=(W_1, \dots, W_L) )$ with weight matrices $W_1, W_2, \dots, W_L \in M_N(\R)$ as follows.
Firstly, consider functions $\act^1, \dots \act^{L-1}$ on $\R$. Besides, we assume that $\act^\ell$ is continuous and differentiable except for finite points.
Secondly, for a single input  $x \in \R^N$ we set $\post^0=x$. 
In addition, for $\ell=1, \dots, L$, set inductively
\begin{equation}
    \pre^\ell  = W_\ell \post^{\ell-1} + b^\ell, \ \ 
      \post^\ell  = \act^\ell(\pre^\ell ),
    \label{align:post-to-pre}
\end{equation}
where $\act^\ell$ acts on $\R^N$ as the entrywise operation.
Note that we set $b^\ell = 0$ to simplify the analysis.
Write $f_\theta(x) = \post^L$.
Denote by $D_\ell$ the Jacobian of the activation $\act^\ell$ given by
\begin{align}
    D_\ell = \pp{\post^\ell}{\pre^\ell} = \diag( (\act^\ell)^\prime(\pre^\ell_1), \dots, (\act^\ell)^\prime(\pre^{\ell}_N)).
\end{align}
Lastly,  we assume that each $W_\ell$ ($\ell=1, \dots, L$) be independent Haar orthogonal random matrices and further consider the following condition (d1), \dots, (d4) on distributions.
In \cref{fig:graphical}, we visualize the dependency of the random variables.

\begin{figure}[h]
    \centering

    \begin{tikzpicture}
    \node [circ] (x0) at (0,0) { $\boldsymbol{x}^0$};
    \node [circ] (h1) at (1.5,0) { $\boldsymbol{h}^1$};
    \node [box] (w1) at (1.5,2) { $\boldsymbol{W}^1$};
    \node [box] (d1) at (1.5,-2) { $\boldsymbol{D}^1$};

    \node [circ] (x1) at (3,0) { $\boldsymbol{x}^1$};
    \node [circ] (h2) at (4.5,0) { $\boldsymbol{h}^2$};
    \node [box] (w2) at (4.5,2) { $\boldsymbol{W}^2$};
    \node [box] (d2) at (4.5,-2) { $\boldsymbol{D}^2$};
    
    \node [empty style] (x2) at (6,0) {};

    \node [empty style] (hL-1) at (9,0) {};

	\node[mark style] at (7.5,0) {$\cdots$};

    \node [circ] (xL-1) at (10.5,0) { $\boldsymbol{x}^{L-1}$};
    \node [circ] (hL) at (12,0) { $\boldsymbol{h}^L$};
    \node [box] (wL) at (12,2) { $\boldsymbol{W}^L$};
    \node [box] (dL) at (12,-2) { $\boldsymbol{D}^L$};

    \node [circ] (xL) at (13.5,0) { $\boldsymbol{x}^L$};

    \draw[every loop,
          auto=left,
          line width=0.2mm,
          >=latex,
          draw=black,
          fill=black]    
        (x0) edge (h1)
        (w1) edge (h1)
        (h1) edge node {$\diag \circ (\boldsymbol{\act}^1)^\prime$} (d1)
        (h1) edge node {$\boldsymbol{\act}_1$}(x1)

        (x1) edge (h2)
        (w2) edge (h2)
        (h2) edge node {$\diag \circ (\boldsymbol{\act}^2)^\prime$} (d2)
        (h2) edge node {$\boldsymbol{\act}^2$}(x2)

        (hL-1) edge (xL-1)
        (wL) edge (hL)
        (xL-1) edge (hL)
        (hL) edge node {$\diag \circ (\boldsymbol{\act}^L)^\prime$} (dL)
        (hL) edge node {$\boldsymbol{\act}^L$}(xL)
        ;

\end{tikzpicture}
    \caption{A graphical model of random matrices and random vectors drawn by the following rules (i--iii). (i)A node's boundary is drawn as a square or a rectangle if it contains a square random matrix; otherwise, it is drawn as a circle. (ii)For each node, its parent node is a source node of a directed arrow. A node is measurable concerning the $\sigma$-algebra generated by all parent nodes. (iii)The nodes which have no parent node are independent.}
    \label{fig:graphical}
\end{figure}
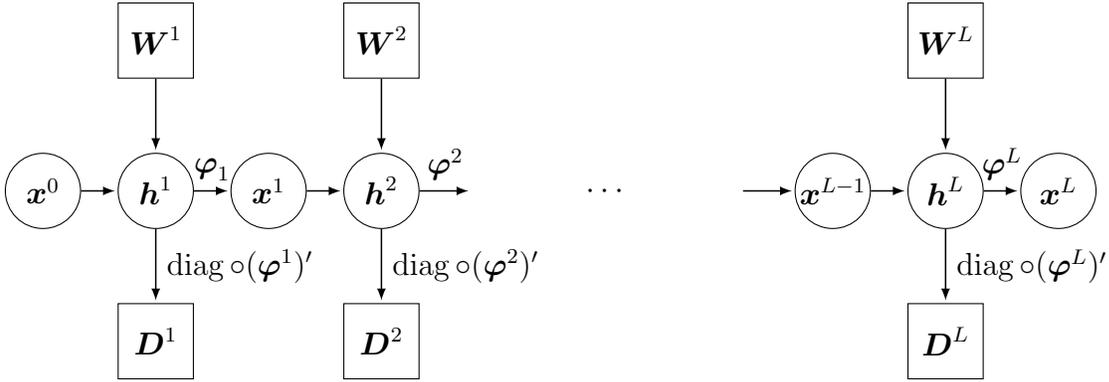

\begin{enumerate}[label=\text{(d\arabic*)}, ref=\text{(d\arabic*)}]
\item  For each $N \in \N$, the input vector $\post^0$ is $\R^N$-valued random variable such that  there is $r > 0$ with
    \begin{align}
     \lim_{N \to \infty}||\post^0||_2/\sqrt{N} = r
    \end{align}
    almost surely.
    \item Each weight matrix $W_\ell$ ($\ell=1, \dots, L$) satisfies
    \begin{align}
         W_\ell = \sigma_{w,\ell} O_\ell,
    \end{align}
    where $O_\ell~(\ell=1,\dots, L)$ are independent  orthogonal matrices distributed with the Haar probability measure and $\sigma_{w,\ell} > 0$.
    \item For fixed $N$, the family 
    \begin{align}
        (\post^0, W_1, \dots, W_L)
    \end{align}  is independent.
\end{enumerate}

Let us define $r_\ell > 0$ and $q_\ell > 0$ by the following recurrence relations:
\begin{align}
    r_0 &=  r,\\
    (r_\ell)^2 &=   \E_{h \sim \Normal(0, q_\ell)} \left[ \act_\ell\left( h \right)^2 \right] \ (l=1, \dots, L),\\
 	q_\ell &= (\sigma_{w,\ell})^2 (r_{\ell-1})^2     \ (l=1, \dots, L).
\end{align}
The inequality $r_\ell < \infty$ holds by the assumption \ref{enum:act_L2} of activation functions.

We further assume that each activation function satisfies the following conditions (a1), \dots, (a5).
\begin{enumerate}[label=\text{(a\arabic*)}, ref=\text{(a\arabic*)}]
    \item It is a continuous function on $\R$ and is not the identically zero function.
    \item For any $q >0$, 
    \begin{align}
        \int_\R \act^\ell(x)^2 \exp(-x^2/q)dx < \infty.
    \end{align}
    \label{enum:act_L2}
    \item It is differentiable almost everywhere concerning Lebesgue measure. We denote by $(\act^\ell)^\prime$ the derivative defined almost everywhere.
    \item The derivative $(\act^\ell)^\prime $ is continuous almost everywhere concerning the Lebesgue measure. \label{enum:act_deriv_conti}
    \item The derivative $(\act^\ell)^\prime $ is bounded. \label{enum:act_deriv_bounded}
\end{enumerate}


\begin{example}[Activation Functions]
The following activation functions are used te{Pennington2017resurrecting, Pennington2018emergence, Hayase2020spectrum} to satisfy the above conditions.
\begin{enumerate}
    \item (Rectified linear unit)
    
    \begin{align}
    \mathrm{ReLU}(x) =
    \begin{cases}
     x ; &x \geq 0,\\
     0 ; &x < 0.
    \end{cases}
     \end{align} 

    \item (Shifted ReLU)
    \begin{align}
    \text{shifted-ReLU}_\alpha(x)
     =
    \begin{cases}
     x ;& x \geq \alpha,\\
     \alpha ;& x < \alpha.
    \end{cases}
     \end{align} 

    \item (Hard hyperbolic tangent)
    \begin{align}
    \text{htanh}(x) 
     =  
    \begin{cases}
     -1 ;&  x \leq -1,\\
     x ;&  -1 < x  <  1,\\
     1 ;&  1 \leq x.
     \end{cases}
     \end{align}

    \item (Hyperbolic tangent)
    \begin{align}
    \tanh(x) = \frac{e^x- e^{-x}}{e^x+e^{-x}}.
    \end{align}

    \item  (Sigmoid function)
    \begin{align}
    \sigma(x) = \frac{1}{e^{-x}+1}.
     \end{align} 

    \item (Smoothed ReLU)
    \begin{align}
    \mathrm{SiLU}(x) = x \sigma(x).
     \end{align} 

    \item (Error function)
     \begin{align}
         \text{erf}(x) = \frac{2}{\sqrt{\pi}}\int^x_0 e^{-t^2}dt.
     \end{align}

\end{enumerate}

\end{example}

\subsection{Basic Notations}
\paragraph{Linear Algebra}
We denote by $M_N(\mathbb{K})$  the algebra of $N \times N$ matrices with entries in a field $\mathbb{K}$. 
Write unnormalized and normalized traces of $A \in M_N(\mathbb{K})$ as follows:
\begin{align}
\Tr(A) &= \sum_{i=1}^N A_{ii},\\
\tr(A) &= \frac{1}{N}\Tr(A).
\end{align}
In this work, a random matrix is a $M_N(\mathbb{\R})$ valued Borel measurable map from a fixed probability space for an $N \in \N$.
We denote by $\O{N}$ the group of $N \times N$ orthogonal matrices. It is well-known that $\O{N}$ is equipped with 
a unique left and right translation invariant probability measure, 
called the Haar probability measure.

\paragraph{Spectral Distribution}
Recall that the  spectral distribution $\mu$ of a linear operator $A$ is a probability distribution $\mu$ on $\R$ such that $\tr(A^m) = \int t^m\mu(dt)$ for any $m \in \N$, where $\tr$ is the normalized trace.
If $A$ is an $N \times N$ symmetric matrix with $N \in \N$, its spectral distribution is given by $N^{-1}\sum_{n=1}^N \delta_{\lambda_n}$, where $\lambda_n (n=1, \dots, N)$ are eigenvalues of $A$, and $\delta_\lambda$ is the discrete probability distribution whose support is $\{\lambda\} \subset \R$.

\paragraph{Joint Distribution of All Entries}
For random matrices $X_1, \dots ,X_L, Y_1, \dots, Y_L$ and random vectors $x_1, \dots, x_L, y_1, \dots, y_L$, we write \begin{align}
    (X_1, \dots, X_L, x_1, \dots, x_L) \entries (Y_1, \dots, Y_L, y_1, \dots, y_L)
\end{align}
if 
the
joint distributions of all entries of corresponding matrices and vectors in the families match.

\subsection{Asymptotic Freeness}

\noindent In this section, we summarize required topics of random matrices and free probability theory.
We start with the following definition.
We omit the definition of a C$^*$-algebra, and for complete details, we refer to \cite{Mingo2017free}.
\begin{defn}\label{defn:ncps}
A  \emph{noncommutative $C^*$-probability space} (NCPS, for short)  is a pair $(\mf{A},\tau ) $ of a unital C$^*$-algebra $\mf{A}$ and a faithful tracial state  $\tau$ on $\mf{A}$, which are defined as follows.
A linear map $\tau$ on $\mf{A}$ is said to be a $\emph{tracial state}$ on $\mf{A}$ if the following four conditions are satisfied.
\begin{enumerate}
    \item $\tau(1) =1$.
    \item $\tau(a^*) = \overline{\tau(a)} \  (a \in \mf{A})$.
    \item $\tau(a^*a) \geq 0 \  (a \in \mf{A})$.
    \item $\tau(ab) = \tau(ba) \ (a,b \in \mf{A})$.
\end{enumerate}
In addition, we say that $\tau$ is \emph{faithful} if $\tau(a^*a)=0$ implies $a=0$.
\end{defn}

For $N \in \N$, the pair of the algebra $M_N(\C)$ of $N \times N$ matrices of complex entries and the normalized trace $\tr$ is an NCPS.
Consider the algebra of $M_N(\R)$ of $N \times N$ matrices of real entries and the normalized trace $\tr$. The pair itself is not an NCPS in the sense of \cref{defn:ncps} since it is not $\C$-linear space.
However,  $M_N(\C)$ contains $M_N(\R)$  and preserves $*$ by setting, 
for $A \in M_N(\R)$: 
\begin{align}
    A^* = A^\top.
\end{align}
Also, the inclusion $M_N(\R) \subset M_N(\C)$ preserves the trace.
Therefore, we consider the joint distributions of matrices in $M_N(\R)$ as that of elements in the NCPS $(M_N(\C), \tr)$.

\begin{defn}(Joint Distribution in NCPS)
Let $a_1, \dots, a_k \in \mf{A}$ and  let $\C \langle X_1, \dots, X_k \rangle$ be the free algebra of non-commutative polynomials on $\C$ generated by $k$ indeterminates $X_1, \dots, X_k$.  Then the \emph{joint distirubtion} of the $k$-tuple $(a_1, \dots, a_k)$ is the linear form $\mu_{a_1, \dots, a_k} : \C \langle X_1, \dots, X_k \rangle \to \C$
defined by
\begin{align}
    \mu_{a_1, \dots, a_k} ( P  ) = \tau (P(a_1, \dots, a_k)),
\end{align}
where $P \in \C \langle X_1, \dots, X_k \rangle$.
\end{defn}

\begin{defn}
    Let $a_1, \dots, a_k \in \mf{A}$.
    Let $A_1(N), \dots, A_k(N)$ $( N \in \N)$ be sequences of $N \times N$ matrices.
    Then we say that they converge in distribution to $(a_1, \dots, a_k)$ if 
    \begin{align}
        \lim_{N \to \infty}\tr \left( P \left( A_1 (N), \dots, A_k(N) \right) \right)   = \tau\left( P \left(a_1, \dots, a_k\right)\right)
    \end{align}
    for any $P \in \C \langle X_1, \dots, X_k \rangle$.
\end{defn}

\begin{defn}(Freeness)
Let $(\mf{A}, \tau)$ be a NCPS.
Let $\mf{A}_1, \dots, \mf{A}_k$ be subalgebras having the same unit as $\mf{A}$. They are said to be \emph{free} if
the following holds:
for any $n \in \N$, any sequence $j_1, \dots, j_n \in [k]$, and any $a_i \in \mf{A}_{j_i}$ ($i=1, \dots, k$) with
\begin{align}
&\tau \left(a_i\right) = 0  \  (i=1, \dots, n),\\
&j_1 \neq j_2, j_2 \neq j_3, \dots, j_{n-1} \neq j_n,
\end{align}
the following holds true: 
 \begin{align}
	\tau \left( a_{j_1}  a_{j_2}  \dots a_{j_n} \right) = 0.
\end{align} 
Besides, elements in $\mf{A}$ are said to be \emph{free} iff the unital subalgebras that they generate are free.
\end{defn}

The example below is basically a reformulation of freeness, and follows from \cite{Voiculescu1991limit}.
\
\begin{example}
Let  $w_1,w_2, \dots, w_L \in \mf{A}$ and $d_1, \dots, d_L \in \mf{A}$. 
Then the families $(w_1,w_1^*), (w_2,w_2^*), \dots, (w_L, w_L^*), (d_1, \dots, d_L)$ are free if and only if the following $L+1$ unital subalgebras of $\mf{A}$ are free:
\begin{align}
& \{ P( w_1, w_1^*) \mid P \in \C \langle X, Y \rangle \},  \dots ,\{ P( w_L, w_L^*) \mid P \in \C \langle X, Y \rangle \}, \\
& \{  Q(d_1, \dots, d_L) \mid  Q \in \C \langle X_1, \dots, X_L \rangle \}.
\end{align}

\end{example}

Let us now introduce asymptotic freeness of random matrices with compact support limit spectral distributions. 
Since we consider a family of a finite number of random matrices, we restrict it to a finite index set.
Note that the finite index is not required for a general definition of freeness.
\begin{defn}[Asymptotic Freeness of Random Matrices]
    Consider a nonempty finite index set  $I$,
    a family $A_i(N)$
    of $N \times N$ random matrices where $N \in \N$.
    Given a partition $\{I_1, \dots, I_k\}$ of $I$,
	consider a sequence of $k$-tuples 
	\begin{align}
	   \left( A_i \left(N \right) \mid  i \in I_1 \right) , \dots ,   	   \left( A_i \left(N \right) \mid  i \in I_k \right). 
	 \end{align}
   It is then said to be
   almost surely
   \emph{asymptotically free} as $N \to \infty$   if the following two conditions are satisfied.
	\begin{enumerate}
		\item There exist a family $(a_i)_{i \in I}$ of elements in $\mf{A}$ such that the following $k$ tuple is free:
		\begin{align}
		    (a_i \mid i \in I_1)  , \dots, (a_i \mid i \in I_k ) .
		\end{align}
		\item 
		For every $P \in \C \langle X_1, \dots, X_{|I|} \rangle$, 
    	\begin{align}
			\lim_{N \to \infty}\tr  \left( P \left( A_1(N), \dots, A_{|I|}(N) \right)  \right) = \tau \left(  P \left(a_1 ,\dots, a_{|I|} \right)  \right),
		\end{align}
			almost surely, where $|I|$ is the number of elements of $I$.
	\end{enumerate}
\end{defn}

\subsection{Haar Distributed Orthogonal Random Matrices}
We introduce asymptotic freeness of Haar distributed orthogonal random matrices. 
\begin{prop}\label{prop:af}
Let $L,L' \in \N$. For any $N \in \N$, let $V_1(N), \dots, V_{L}(N)$ be independent  $\O{N}$ Haar random matrices, and $A_1(N), \dots, A_{L'}(N)$ be symmetric random matrices, which have the almost-sure-limit joint distribution.
Assume that all entries of $(V_\ell(N))_{\ell=1}^{L}$  are independent of that of 
    $(A_1(N), \dots, A_{L'}(N))$,
for each $N$.
Then the  families 
\begin{align}
 ( V_1(N), V_1(N)^\top ), \dots, (V_{L}(N), V_{L}(N)^\top), (A_1(N), \dots, A_{L'}(N)).   
\end{align}
are asymptotically free as $N \to \infty$.
\end{prop}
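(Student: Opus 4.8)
The statement I am planning to prove is \cref{prop:af}: asymptotic freeness, almost surely, of $L$ pairs $(V_\ell, V_\ell^\top)$ of independent Haar orthogonal matrices together with one family $(A_1,\dots,A_{L'})$ of symmetric random matrices independent of the $V_\ell$ and having an almost-sure limit joint distribution.

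\medskip

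\noindent\emph{Plan.} The plan is to reduce the almost-sure statement to a statement about expected traces plus a variance bound, and then to invoke the known asymptotic freeness of Haar unitaries/orthogonals from \cite{collins2006integration,collins2006integration}. First I would fix a realization of the probability space and condition on the $\sigma$-algebra generated by $(A_1(N),\dots,A_{L'}(N))_N$; by the independence assumption the $V_\ell(N)$ remain Haar distributed and independent after conditioning. Along a fixed almost-sure event the deterministic matrices $A_j(N)$ converge in distribution to some $(a_1,\dots,a_{L'})$ in a limiting NCPS. Then I would apply the Collins--\'Sniady / Collins integration results: independent Haar orthogonal matrices are asymptotically free from any deterministic (constant) matrices with a limit distribution, and moreover asymptotically free among themselves, where each $V_\ell$ together with $V_\ell^\top$ generates, in the limit, a Haar orthogonal element $v_\ell$. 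This gives convergence of $\E[\tr(P(V_1,V_1^\top,\dots,A_{L'}))]$ to the free product value $\tau(P(v_1,v_1^\top,\dots,a_{L'}))$ for every noncommutative polynomial $P$, where $v_1,\dots,v_L,(a_1,\dots,a_{L'})$ are free, the $v_\ell$ Haar orthogonal. The existence of the limiting family $(a_i)$ that is free from the $v_\ell$'s is then automatic: take the free product of the algebra generated by $(a_1,\dots,a_{L'})$ with $L$ copies of the orthogonal group von Neumann algebra.

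\medskip

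\noindent\emph{From expectation to almost sure.} To upgrade convergence in expectation to almost sure convergence I would establish, for each fixed $P$, a variance estimate of the form $\Var\big(\tr(P(V_1,V_1^\top,\dots,A_{L'}))\big) = O(N^{-2})$, which is the standard concentration rate for polynomials in Haar matrices (conditionally on the $A_j$, these follow from either the Weingarten-calculus genus expansion or from the concentration of measure / log-Sobolev inequality on $\O{N}$ with constant of order $N$). Summability of $N^{-2}$ then gives, via Borel--Cantelli, almost sure convergence for each fixed $P$; since $\C\langle X_1,\dots,X_{|I|}\rangle$ has a countable $\Q[i]$-spanned dense subset and both sides are continuous/linear in $P$, a single almost-sure event works simultaneously for all $P$. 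Intersecting with the almost-sure event on which the $A_j(N)$ converge in distribution finishes the argument.

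\medskip

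\noindent\emph{Main obstacle.} The conceptual steps are all classical; the one place requiring care is the conditioning argument that makes the "random matrices $A_j$" behave like deterministic matrices with respect to the Haar randomness. Concretely, the subtlety is that the limiting joint distribution of the $A_j(N)$ is only guaranteed almost surely, so I must first restrict to the good event for the $A_j$'s, then \emph{on that event} treat the $A_j(N)$ as constants and apply the Haar asymptotic-freeness-from-constants result together with the variance bound --- and only afterward take expectations/Borel--Cantelli over the $V_\ell$ randomness. Making this two-stage conditioning rigorous (Fubini to justify that the relevant null sets combine, and checking the variance bound holds uniformly enough in the conditioned matrices, using only that their operator norms need not even be bounded since traces of polynomials are controlled by moments) is the technical heart. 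I do not expect genuine difficulty, only bookkeeping; everything else is a direct citation of \cite{collins2006integration}.
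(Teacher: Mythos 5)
Your proposal is mathematically sound, but it re-proves what the paper handles by a one-line citation: the paper's entire proof of this proposition is the remark that it is a particular case of Theorem~5.2 of \cite{collins2006integration}, which already covers independent Haar orthogonal matrices together with an independent family of random matrices having an almost-sure limit joint distribution, with almost-sure asymptotic freeness as the conclusion. Your scheme --- freeze the $A_j(N)$ on their good event, apply the Weingarten/deterministic-matrix asymptotics to get convergence of expected traces, then an $O(N^{-2})$ variance bound and Borel--Cantelli over a countable family of polynomials --- is essentially the standard proof of that cited theorem, so nothing fails; what it buys is self-containedness at the cost of bookkeeping the paper avoids. If you do carry it out, two points need care. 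First, the independence hypothesis is stated per $N$ (entries of $(V_\ell(N))_\ell$ independent of entries of $(A_j(N))_j$ for each fixed $N$), so conditioning on the $\sigma$-algebra generated by the whole sequence $(A_j(N))_{N}$ is not literally licensed; either strengthen to independence of the full families (which holds in the paper's application) or work with conditional expectations and conditional variances given $A(N)$ alone, applying Chebyshev and Borel--Cantelli after intersecting with events on which the relevant word-traces of the $A_j(N)$ are bounded by a fixed $K$ and then letting $K\to\infty$. Second, the paper's definition of asymptotic freeness demands a limiting tuple inside a C$^*$-probability space with a faithful trace, so you must be able to realize the limiting moments of the $A_j$'s by bounded elements before forming the free product with the Haar orthogonal elements; this is implicit in ``having an almost-sure limit joint distribution'' as used here and is unproblematic in the application, where the $D_\ell$ are uniformly bounded.
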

\begin{proof}
This is a particular case of \cite[Theorem~5.2]{collins2006integration}.
\end{proof}

The following proposition is a direct consequence of \cref{prop:af}.
\begin{prop}\label{prop:AVBV}
     For $N \in N$, let $A(N)$ and $B(N)$ be  $N \times N$ symmetric random matrices, and let $V(N)$ be a $N \times N$ Haar-distributed orthogonal random matrix.
    Assume that 
    \begin{enumerate}
        \item The random matrix $V(N)$ is  independent of $A(N), B(N)$ for every $N \in \N$.
        \item The spectral distribution of $A(N)$ (resp.\,$B(N)$)
        converges in distribution to a compactly supported probability measure 
        $\mu$ (resp.\,$\nu$), almost surely.
    \end{enumerate}
    Then  the following pair  is asymptotically free as $N \to \infty$,
    \begin{align}
     A(N), V(N)B(N)V(N)^\top ,
    \end{align}
    almost surely.
\end{prop}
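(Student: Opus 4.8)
The plan is to deduce \cref{prop:AVBV} from \cref{prop:af} by taking $L=1$ and $L'=2$ there, with $V_1(N) = V(N)$, $A_1(N) = A(N)$, and $A_2(N) = B(N)$. First I would check that the hypotheses of \cref{prop:af} are met: $A(N)$ and $B(N)$ are symmetric; by hypothesis (2) each has an almost-sure limit in the sense of spectral distribution, but \cref{prop:af} wants an almost-sure limit \emph{joint} distribution of the tuple $(A(N), B(N))$. This is the one genuine gap to fill, and I expect it to be the main (indeed, only) obstacle; see the next paragraph. The independence hypothesis of \cref{prop:af}, namely that all entries of $V(N)$ are independent of all entries of $(A(N),B(N))$, is exactly hypothesis (1) here.

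To fill the gap, observe that $A(N)$ and $B(N)$ need not be jointly well-behaved in general, but we do not need them to be: since $V(N)$ is Haar and independent of the pair, we may condition on $(A(N), B(N))$ and argue that the conclusion holds for almost every realization of that pair. Concretely, fix a realization in the almost-sure event on which $\ESD(A(N)) \to \mu$ and $\ESD(B(N)) \to \nu$ weakly. Along this realization, $A(N)$ and $B(N)$ are now deterministic symmetric matrices with convergent spectral distributions, so the deterministic (constant) family $(A(N), B(N))$ trivially has a limit joint distribution: any noncommutative polynomial $P(A(N),B(N))$ has a normalized trace, and we only need those traces to converge. But in fact we do not even need the mixed moments to converge, because \cref{prop:af} with $L'=2$ outputs that $(V(N), V(N)^\top)$ is asymptotically free from the pair $(A(N), B(N))$; freeness of $A$ from $VBV^\top$ is a statement about mixed moments of $A$ and $VBV^\top$ alone, and standard computations (integrating the Haar matrix via the Weingarten calculus, as in \cite{collins2006integration}) show these mixed moments depend only on the individual limit moments of $A$ and of $B$, i.e.\ on $\mu$ and $\nu$ separately. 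Thus the limit joint distribution of the pair $(A(N),B(N))$ in the conditioned model exists in the only sense that matters, and \cref{prop:af} applies conditionally.

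Assembling the pieces: on the almost-sure event $\Omega_0$ where $\ESD(A(N)) \to \mu$ and $\ESD(B(N)) \to \nu$, condition on $(A(N),B(N))_{N}$ and apply \cref{prop:af} (with $L=1$, $L'=2$) to the now-deterministic matrices $A(N), B(N)$ together with the Haar matrix $V(N)$; this gives that, for a.e.\ realization of $V(N)$, the pair $(V(N),V(N)^\top)$ is asymptotically free from $(A(N),B(N))$. Restricting attention to the subfamily generated by $A$ and $VBV^\top$, we get exactly that $A(N)$ and $V(N)B(N)V(N)^\top$ are asymptotically free as $N \to \infty$, almost surely. Finally, one should note that the limiting elements $a$ (with distribution $\mu$) and $b$ (with distribution $\nu$) realizing the freeness can be taken in a single NCPS by the standard free-product construction, so condition (1) of the definition of asymptotic freeness is satisfied. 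The routine verification that conditioning does not disturb measurability (Fubini, plus the fact that the relevant limsup/liminf of traces are measurable in the pair) I would relegate to a sentence.

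Overall I expect no serious difficulty: \cref{prop:AVBV} is essentially a corollary obtained by specializing the index sets in \cref{prop:af} and peeling off the conditioning on $(A,B)$; the only thing requiring care is the phrase ``have the almost-sure-limit joint distribution'' in \cref{prop:af}, which I would address exactly as above by conditioning and noting that the Weingarten-type computation behind \cref{prop:af} uses only the marginal limit distributions of $A$ and $B$.
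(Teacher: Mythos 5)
There is a genuine gap. \Cref{prop:af} requires the almost-sure existence of the \emph{limit joint distribution} of the tuple $(A(N),B(N))$, and you correctly identify that this is not granted by hypothesis (2); but your fix does not actually restore it. Conditioning on a realization of $(A(N),B(N))_N$ does nothing for this hypothesis: for a fixed realization the mixed traces $\tr\bigl(A(N)^{k_1}B(N)^{m_1}\cdots\bigr)$ may simply fail to converge, so \cref{prop:af} is being invoked outside its stated hypotheses. Your justification --- that ``the Weingarten-type computation behind \cref{prop:af} uses only the marginal limit distributions of $A$ and $B$'' --- is an assertion about the internal proof of the cited theorem, not a consequence of its statement; to make it rigorous you would have to reopen the Haar-integration argument and supply, in addition, the concentration (variance/Borel--Cantelli) estimates needed for \emph{almost-sure} convergence, i.e.\ essentially re-prove a special case of \cite{collins2006integration}. (A legitimate completion along your lines would be a compactness/subsequence argument: under uniform moment bounds, extract subsequences along which the joint distribution of $(A(N),B(N))$ converges, apply \cref{prop:af} along each, and observe that the limiting mixed moments of $A$ and $VBV^\top$ are determined by $\mu$ and $\nu$ alone, hence subsequence-independent. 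You gesture at this but do not carry it out.)

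The paper avoids the issue entirely while keeping \cref{prop:af} as a black box: it introduces a second, independent Haar matrix $U(N)$ and conjugates both matrices globally (which does not change the joint distribution), uses translation invariance $(U,UV)\sim(U,V)$ to reduce the claim to the asymptotic freeness of $U A U^\top$ and $V B V^\top$, and then replaces $A(N)$ and $B(N)$ by the diagonal matrices $\tilde A(N)$, $\tilde B(N)$ of their eigenvalues arranged in non-increasing order. This replacement leaves the joint law of the pair unchanged (because of the conjugations by $U$ and $V$), and the sorted diagonal pair $(\tilde A(N),\tilde B(N))$ \emph{does} have a limit joint distribution by construction (the monotone coupling of $\mu$ and $\nu$), so \cref{prop:af} applies verbatim. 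You should either adopt this symmetrization-plus-sorting trick or fully execute the subsequence/Weingarten argument; as written, the step where you ``apply \cref{prop:af} conditionally'' does not go through.
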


\begin{proof}
Instead of proving that $A(N), V(N)B(N)V(N)^\top$ are asymptotically free, we will prove that
$U(N)A(N)U(N)^\top, U(N)V(N)B(N)V(N)^\top U(N)^\top$ for any orthogonal matrix $U(N)$, and
in particular, for an independent Haar distributed orthogonal matrix. 
This is equivalent because a global conjugation by $U(N)$ does not affect the joint distribution. 
In turn, since $U(N),U(N)V(N)$ has the same distribution as $U(N),V(N)$ thanks to the Haar 
property, it is enough to prove that 
$U(N)A(N)U(N)^\top, V(N)B(N)V(N)^\top$ is asymptotically free as as $N \to \infty$.
Let us replace $A(N)$ by $\tilde A(N)$ where $\tilde A(N)$ is diagonal, and 
has the same eigenvalues as $A(N)$, arranged in non-increasing order, and likewise, 
we construct $\tilde B(N)$ from $B(N)$. 
It is clear that
\begin{align}
U(N)A(N)U(N)^\top, V(N)B(N)V(N)^\top
\end{align}
and
\begin{align}
U(N)\tilde A(N)U(N)^\top, V(N)\tilde B(N)V(N)^\top    
\end{align}
have the same distribution. 
In addition, $\tilde B(N), \tilde A(N)$ have a joint distribution by construction, 
therefore we can apply \cref{prop:af}.

\end{proof}

Note that we do \textbf{not} require independence between $A(N)$ and $B(N)$ in \cref{prop:AVBV}.  
Here we recall the following result, which is a direct consequence of the translation invariance of Haar random matrices.
\begin{lemma}\label{lemma:UV}
Fix $N \in \N$.
Let $V_1, \dots, V_L$ be independent  $\O{N}$ Haar random matrices.
Let $T_1, \dots, T_L$ be $\O{N}$ valued random matrices.
Let $S_1, \dots, S_L$ be $\O{N}$ valued random matrices.
Let $A_1, \dots, A_L$ be $N \times N$ random matrices.
Assume that all entries of $(V_\ell)_{\ell=1}^L$  are independent of 
\begin{align}
    (T_1, \dots, T_L, S_1, \dots, S_L, A_1, \dots, A_L).
\end{align}
Then,
\begin{align}\label{align:TVS}
 (T_1 V_1 S_1, \dots, T_L V_L S_L, A_1, \dots, A_L)  \entries  ( V_1, \dots, V_L, A_1, \dots, A_L).   
\end{align}
\end{lemma}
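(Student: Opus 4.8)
The statement to prove is \cref{lemma:UV}, which asserts an equality of joint distributions of all entries: conjugating (actually, two-sided multiplying) independent Haar orthogonal matrices $V_\ell$ by independent orthogonal random matrices $T_\ell$ (on the left) and $S_\ell$ (on the right) — where $T_\ell, S_\ell$ may depend on some auxiliary matrices $A_\ell$ but are independent of the $V_\ell$'s — does not change the joint law of $(V_1,\dots,V_L,A_1,\dots,A_L)$ when the $V_\ell$'s are replaced by $T_\ell V_\ell S_\ell$.

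The plan is to condition on the sigma-algebra $\mathcal{G}$ generated by $(T_1,\dots,T_L,S_1,\dots,S_L,A_1,\dots,A_L)$ and use the left/right translation invariance of the Haar measure on $\O{N}$. First I would observe that by the independence assumption, conditionally on $\mathcal{G}$ the tuple $(V_1,\dots,V_L)$ is still a family of independent Haar orthogonal matrices (independence of $\sigma((V_\ell)_\ell)$ from $\mathcal{G}$ means the conditional law equals the unconditional law). Next, for fixed values $t_\ell \in \O{N}$, $s_\ell \in \O{N}$ of the auxiliary matrices, the map $V_\ell \mapsto t_\ell V_\ell s_\ell$ pushes the Haar measure forward to itself, by the defining bi-invariance of Haar measure; since the $V_\ell$ are conditionally independent, the joint pushforward of $(V_1,\dots,V_L)$ under $(V_\ell)\mapsto (t_\ell V_\ell s_\ell)$ is again the product Haar measure. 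Therefore the conditional joint law of $(T_1V_1S_1,\dots,T_LV_LS_L, A_1,\dots,A_L)$ given $\mathcal{G}$ equals (product Haar)$\otimes \delta_{(A_1,\dots,A_L)}$, which is exactly the conditional law of $(V_1,\dots,V_L,A_1,\dots,A_L)$ given $\mathcal{G}$. Taking expectations over $\mathcal{G}$ gives the claimed equality of unconditional joint distributions, which in particular yields the matching of all entrywise distributions encoded by $\entries$.

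Concretely, to make this rigorous I would test against bounded measurable functions: for any bounded Borel $F$ on $(M_N(\R))^{2L}$ it suffices to show
\begin{align}
\E\bigl[ F(T_1V_1S_1,\dots,T_LV_LS_L,A_1,\dots,A_L) \bigr] = \E\bigl[ F(V_1,\dots,V_L,A_1,\dots,A_L) \bigr].
\end{align}
Writing the left side as $\E\bigl[ \E[ F(t_1V_1s_1,\dots,t_LV_Ls_L,a_1,\dots,a_L) ]|_{t_\ell=T_\ell, s_\ell = S_\ell, a_\ell=A_\ell} \bigr]$ using independence, and then using for each fixed $(t_\ell,s_\ell)$ that $(V_1,\dots,V_L)$ and $(t_1V_1s_1,\dots,t_LV_Ls_L)$ have the same law (product of Haar measures, by bi-invariance applied coordinatewise), the inner expectation equals $\E[ F(V_1,\dots,V_L,a_1,\dots,a_L)]$; re-integrating over $(T_\ell,S_\ell,A_\ell)$ and undoing the independence step gives the right side.

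There is no real obstacle here — the statement is essentially a bookkeeping exercise around bi-invariance of Haar measure and the definition of conditional independence; the only point requiring a little care is making sure the auxiliary matrices $A_\ell$ (which need not be orthogonal) are carried along unchanged, and that $T_\ell, S_\ell$ are genuinely independent of the whole family $(V_\ell)_\ell$ rather than of each $V_\ell$ individually, so that the conditioning argument handles all $L$ factors simultaneously. This is exactly what is assumed, so the proof goes through as sketched.
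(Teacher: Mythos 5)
Your proof is correct and follows essentially the same route as the paper: condition on the family $(T_\ell, S_\ell, A_\ell)_\ell$, use independence to identify the conditional law of $(V_\ell)_\ell$ as product Haar, and invoke left/right invariance of the Haar measure to absorb the fixed orthogonal factors. The only cosmetic difference is that the paper verifies the equality of joint laws via characteristic functions, whereas you test against general bounded Borel functions; both are standard and equivalent ways to finish the conditioning argument.
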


\begin{proof}
For the readers' convenience, we include a proof. The characteristic function of  $(T_1 V_1 S_1, \dots, T_L V_L S_L, A_1, \dots, A_L)$ is given by 
\begin{align}\label{align:XTV+YA}
\E[\exp[-i \Tr [ \sum_{\ell=1}^L X_\ell ^\top T_\ell V_\ell S_\ell  + Y_\ell^\top A_\ell ]]],    
\end{align}
where $X_1, \dots, X_L \in M_N(\R)$ and $Y_1, \dots, Y_L \in M_N(\R)$.
By using conditional expectation,   \eqref{align:XTV+YA} is equal to 
\begin{align}\label{align:XTV+YA-c}
\E\left[\E\left[ \exp\left[-i \Tr \left( \sum_{\ell=1}^L X_\ell ^\top T_\ell V_\ell S_\ell \right) \mid T_\ell, S_\ell, A_\ell \left(\ell=1, \dots, L\right) \right] \exp\left[-i \Tr \left( Y_\ell^\top A_\ell \right) \right]\right]\right] .
\end{align}
By the property of the Haar measure and the independence,  the conditional expectation contained in \eqref{align:XTV+YA-c} is equal to \begin{align}
    \E\left[ \exp\left[-i \Tr \left( \sum_{\ell=1}^L X_\ell ^\top V_\ell \right) \right] \mid T_\ell, S_\ell, A_\ell \left(\ell=1, \dots, L\right) \right].
\end{align}
Thus the assertion holds.
\end{proof}

\subsection{Forward Propagation through MLP}

\subsubsection{Action of Haar Orthogonal Matrices}
Firstly we consider action of Haar orthogonal to a random vector with finite second moment.
For $N$-dimensional random vector $x=(x_1, \dots, x_n)$, we denote its empirical distribution by 
\begin{align}
\nu_x :=  \frac{1}{N}\sum_{n=1}^N \delta_{\post_n},
\end{align}
where $\delta_x$ is the delta probability measure at the point $x \in \R$.

\begin{lemma}\label{lem:hidden_convergence}
Let $(\Omega, \mathcal{F}, \mathbb{P})$ be a probability space and $x(N)$ be a $\R^N$ valued random variable for each $N\in\N$.
Assume that there exists $r > 0$ such that
\begin{align}
\sqrt{ \frac{1}{N} \sum_{n=1}^N \left( x(N)_n \right)^2 } \to r
\end{align}
as $N \to \infty$ almost surely. 
Let $O(N)$ be a Haar distributed $N$-dimensional orthogonal matrix. Set
\begin{align}
    h(N) =  O(N) x(N).
\end{align}
Furthermore we assume that  $x(N)$ and $O(N)$ are independent.
Then 
\begin{align}
 \nu_{h(N)} \implies \Normal(0, r^2)\end{align}
as $N \to \infty$ almost surely.
\end{lemma}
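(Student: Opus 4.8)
The plan is to show that $O(N)x(N)$, conditioned on $x(N)$, is a uniformly random vector on a sphere of radius $\|x(N)\|_2$, and then invoke the classical fact that the empirical distribution of the coordinates of a uniform vector on the sphere $\sqrt{N}\,\mathbb{S}^{N-1}$ converges weakly to a standard Gaussian. First I would fix a realization of $x(N)$ in the almost-sure event where $\|x(N)\|_2/\sqrt{N} \to r$. By independence of $x(N)$ and $O(N)$ and the left-invariance of Haar measure, the conditional law of $h(N) = O(N)x(N)$ given $x(N)$ depends only on $\rho_N := \|x(N)\|_2$: it is the pushforward of Haar measure on $\O{N}$ under $O \mapsto O v$ for any fixed unit vector $v$, rescaled by $\rho_N$. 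Equivalently, $h(N) \overset{d}{=} \rho_N \, g/\|g\|_2$ where $g \sim \Normal(0, I_N)$ is independent of everything; this is the standard representation of the uniform distribution on the sphere.

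Next I would reduce the problem of convergence of $\nu_{h(N)}$ to a statement about i.i.d.\ Gaussians. Writing $h(N)_n = \rho_N g_n / \|g\|_2 = (\rho_N/\sqrt{N}) \cdot g_n / (\|g\|_2/\sqrt{N})$, the law of large numbers gives $\|g\|_2/\sqrt{N} \to 1$ almost surely, while by hypothesis $\rho_N/\sqrt{N} \to r$. So $h(N)_n = c_N g_n$ with a deterministic-in-$g$ scalar $c_N \to r$ almost surely. It then suffices to observe that the empirical measure $\frac1N\sum_n \delta_{g_n}$ of i.i.d.\ standard Gaussians converges weakly to $\Normal(0,1)$ almost surely (Glivenko--Cantelli, or the almost-sure version of the law of large numbers applied to $\frac1N\sum \mathbf 1_{g_n \le t}$ along a countable dense set of $t$), and that rescaling by $c_N \to r$ and the weak convergence $\frac1N\sum_n \delta_{g_n} \To \Normal(0,1)$ together yield $\frac1N\sum_n \delta_{c_N g_n} \To \Normal(0,r^2)$. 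The last step is a routine continuity argument: for bounded continuous $\varphi$, $\frac1N\sum_n \varphi(c_N g_n) - \frac1N\sum_n \varphi(r g_n) \to 0$ using uniform continuity of $\varphi$ on compacts together with a truncation to control the tail mass, and $\frac1N\sum_n \varphi(r g_n) \to \E[\varphi(rg_1)] = \int \varphi\,d\Normal(0,r^2)$.

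A technical point to handle with care is the interchange of the two sources of randomness: the hypothesis is an almost-sure statement about $x(N)$ on the original space $\Omega$, while the sphere representation introduces an auxiliary Gaussian $g$. The clean way is to work conditionally: on the full-measure event $\{\|x(N)\|_2/\sqrt N \to r\}$, the conditional distribution of $\nu_{h(N)}$ given $(x(N))_{N}$ is that of $\nu_{c_N g}$ with $c_N \to r$ a fixed sequence, and the convergence $\nu_{c_N g} \To \Normal(0,r^2)$ holds for almost every $g$; then Fubini (applied to the event, in the space of bounded continuous test functions evaluated along a countable convergence-determining family) promotes this to almost-sure convergence on the joint space, hence on $\Omega$ after noting $O(N)$ can be built from such a $g$. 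I expect this measure-theoretic bookkeeping — making precise that ``almost surely in $x$, almost surely in $O$'' upgrades to ``almost surely'' — to be the only real subtlety; everything else is the standard sphere-to-Gaussian heuristic plus the law of large numbers.
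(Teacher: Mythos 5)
Your proposal is correct and takes essentially the same route as the paper: by independence and Haar invariance, $h(N)$ is reduced to $\rho_N$ times a uniform vector on the sphere, which is represented as a normalized i.i.d.\ Gaussian vector $g/\|g\|_2$, with the law of large numbers handling both $\|g\|_2/\sqrt{N}\to 1$ and $\rho_N/\sqrt{N}\to r$. The only difference is in the final step — the paper checks convergence of the empirical moments and invokes moment-determinacy of the Gaussian, while you finish via almost-sure weak convergence of the empirical measure of the $g_n$ plus a rescaling/continuity argument — a cosmetic variation, and your explicit attention to the ``a.s.\ in $x$, a.s.\ in $O$'' bookkeeping is, if anything, more careful than the paper's.
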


\begin{proof}
Let $e_1=(1, 0, \dots, 0) \in \R^N$. Then there is an orthogonal random matrix $U$ such that $x(N) = ||x(N)||_2 Ue_1$, where $|| \cdot ||_2$ is the Euclid norm.
Write $r(N) := ||x(N)||_2/\sqrt{N}$ and $u(N)$ be unit vector uniformly distributed on the unit sphere, independent of $r(N)$. 
Since $O(N)$ is a Haar orthogonal and since $O(N)$ and $U$ are independent, it holds that $O(N)U \sim^{dist.} O(N)$. Then
\begin{align}
    h(N) =  O(N)x(N) =  (||x(N)||_2/\sqrt{N} )( \sqrt{N}O(N)U e_1) \sim^{dist.} r(N) (\sqrt{N}u(N)).
\end{align}

Firstly, by the assumption, 
\begin{align}
    r(N) = \sqrt{ \frac{1}{N}\sum_{n=1}^N \post(N)_n^2 }\to r \text{ as \ $N \to \infty$,\  almost\ surely.}
\end{align}

Secondly, let $(Z_i)_{i=1}^\infty$ be i.i.d.\,standard Gaussian random variables.
Then 
\begin{align}
    u(N) \sim^{dist.}  \left( \frac{Z_n}{ \sqrt{ \sum_{n=1}^N Z_n^2 } } \right)_{n=1}^N.
\end{align}
For $k \in \N$,
\begin{align}
    m_k(\nu_{\sqrt{N}u(N)}) &= \frac{1}{N}\sum_{n=1}^{N}N^{k/2}u(N)_n^k = \frac{ N^{-1}\sum_{n=1}^N Z_n^k  }{[N^{-1} \sum_{n=1}^N Z_n^2 ]^{k/2} }\\  &\to \frac{m_k( \Normal(0,1) )}{m_2( \Normal(0,1) )^{k/2} } =  m_k( \Normal(0,1) ) \text{\ as \ $N \to \infty$, a.s.}
\end{align}
Now convergence in moments to  Gaussian distribution implies convergence in law.
Therefore, 
\begin{align}
    \nu_{\sqrt{N}u(N)}  \implies  \Normal(0, 1),
\end{align}
almost surely. This completes the proof.
\end{proof}

Note that we do \textbf{not} assume that entries of $x(N)$ are independent.

\begin{lemma}\label{lemma:function_convergence}
Let $g$ be a measurable function and set 
\begin{align}
N_g=\{x \in \R \mid g \text{\ is discontinuous at } x \}.    
\end{align}
Let $Z \sim \Normal(0,1)$.
Assume that $\mathbb{P}(Z \in N_g) = 0$.
Then under the setting of \cref{lem:hidden_convergence}, it holds that
\begin{align}
\nu_{g(h(N) ) } \implies g(Z)  
\end{align}
as $N \to \infty$ almost surely.
\end{lemma}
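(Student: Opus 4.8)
The plan is to reduce the claim to \cref{lem:hidden_convergence} followed by an application of the continuous mapping theorem for weak convergence of empirical measures. First I would recall from the proof of \cref{lem:hidden_convergence} (or re-derive it) that there is a joint-distributional identity $h(N) \sim^{dist.} r(N)\,\sqrt{N}\,u(N)$ where $u(N)$ is uniform on the unit sphere and $r(N) = \|x(N)\|_2/\sqrt N \to r$ almost surely; equivalently, one can use directly the already-established conclusion $\nu_{h(N)} \implies \Normal(0,r^2)$ almost surely. Since the statement to be proved only concerns the \emph{law} of the empirical measure $\nu_{g(h(N))}$ as a random probability measure (and its a.s.\ weak limit), passing to this distributional representation is harmless.

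The key step is then the following deterministic observation: if $\mu_N \implies \mu$ is a convergent sequence of probability measures on $\R$, and $g$ is a Borel function whose discontinuity set $N_g$ satisfies $\mu(N_g) = 0$, then the pushforwards satisfy $g_*\mu_N \implies g_*\mu$. This is exactly the measure-theoretic form of the continuous mapping theorem (mapping theorem for weak convergence); I would cite it or give the two-line proof via bounded continuous test functions $\varphi$: $\varphi \circ g$ is bounded and $\mu$-a.e.\ continuous, and weak convergence $\mu_N \implies \mu$ together with the Portmanteau theorem gives $\int \varphi\circ g\, d\mu_N \to \int \varphi \circ g\, d\mu$. Here I apply this with $\mu_N = \nu_{h(N)}$, $\mu = \Normal(0,r^2)$, and note $\nu_{g(h(N))} = g_*\nu_{h(N)}$ by construction. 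The hypothesis $\mathbb P(Z \in N_g) = 0$ for $Z \sim \Normal(0,1)$ is equivalent, after rescaling, to $\Normal(0,r^2)(N_g) = 0$ provided $r > 0$; I should note that $r>0$ is part of the standing hypothesis of \cref{lem:hidden_convergence} (and flag the degenerate case $r=0$ if needed, where $h(N) \to 0$ and $g(Z)$ should be read as $\delta_{g(0)}$, requiring continuity of $g$ at $0$).

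The only genuine subtlety — and the main obstacle — is the almost-sure quantifier: \cref{lem:hidden_convergence} gives a single event $\Omega_0$ of probability one on which $\nu_{h(N)} \implies \Normal(0,r^2)$, and on that same event the deterministic mapping theorem applies verbatim, so $\nu_{g(h(N))} \implies g_*\Normal(0,r^2) = \text{law of } g(Z)$ for every $\omega \in \Omega_0$. One must be slightly careful that the distributional identity $h(N) \sim^{dist.} r(N)\sqrt N u(N)$ from the earlier proof is an identity in law for each fixed $N$ rather than an almost-sure pathwise statement; the cleanest route is therefore to \emph{not} reintroduce that representation here but to take $\nu_{h(N)} \implies \Normal(0,r^2)$ a.s.\ as the black-box input from \cref{lem:hidden_convergence} and feed it directly into the mapping theorem, since weak convergence of the (random) measures $\nu_{h(N)}$ is precisely what is asserted there. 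I would therefore structure the write-up as: (i) invoke \cref{lem:hidden_convergence}; (ii) state the mapping theorem for pushforwards with its $\mu$-null discontinuity-set hypothesis; (iii) verify $\Normal(0,r^2)(N_g)=0$ from $\mathbb P(Z\in N_g)=0$; (iv) conclude on the probability-one event. No computation beyond the rescaling in (iii) is needed.
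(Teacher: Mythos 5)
Your proposal is correct and follows essentially the same route as the paper: take the almost-sure weak convergence $\nu_{h(N)} \implies \Normal(0,r^2)$ from \cref{lem:hidden_convergence} as a black box, fix an $\omega$ in the probability-one event, and apply the continuous mapping theorem (the paper phrases it via auxiliary random variables $X_N \sim \nu_{h(N)(\omega)}$ and Durrett's Theorem~3.2.4, which is the same statement as your pushforward/mapping-theorem formulation). Your explicit remark that $\mathbb{P}(Z \in N_g)=0$ for $Z\sim\Normal(0,1)$ transfers to $\Normal(0,r^2)(N_g)=0$ because $r>0$ is a point the paper leaves implicit, and is a welcome clarification.
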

\begin{proof}
Let $F = \{\omega \in \Omega \mid \nu_{g(h(N)(\omega)) } \implies g(Z) \text{\ as \ } N \to \infty \}$.  By \cref{lem:hidden_convergence}, $P(F) = 0$.
Fix $\omega \in \Omega \setminus F$.
For $N \in \N$, let $X_N$ be a real random variable on the probability space with 
\begin{align}
    X_N \sim  \nu_{h(N)(\omega)}.
\end{align}
By the assumption, we have $\mathbb{P}( Z \in N_g ) = 0$. Then the continuous mapping theorem (see \cite[Theorem~3.2.4]{Durret2010probability}) implies that
\begin{align}
    g(X_N) \To g(Z).
\end{align}
Thus for any bounded continuous function $\psi$, 
\begin{align}
\int \psi(t) \nu_{g\circ h(N)(\omega)}( dt) =   \frac{1}{N}\sum_{i=n}^N \psi \circ g\left(\pre(n)\left(\omega\right) \right) = \E[ \psi \circ g(X_N) ] \to \E[\psi  \circ g(Z)].
\end{align}
Hence $\nu_{g(h(N)(\omega))} \implies g(Z)$. Since we took arbitrary $\omega \in \Omega \setminus F$ and $\mathbb{P}(\Omega \setminus F) = 1$, the assertion follows.
\end{proof}

\subsubsection{Convergence of Empirical Distribution}\label{ssec:emp}

Furthermore, for any measurable function $g$ on $\R$ and probability measure $\mu$, we denote by $g_*(\mu)$ the push-forward of $\mu$. That is, if a real random variable $X$ is distributed with $\mu$, then $g_*(\mu)$ is the distribution of $g(X)$.

\begin{prop}\label{prop:signal}
For all $\ell =1, \dots, L$, it holds that
\begin{enumerate}[ref=\text{Prop.\,\ref{prop:signal}\,(\arabic*)}]
	\item \label{enum:limit-of-h} $\nu_{\pre^\ell} \To \Normal(0, q_\ell),   $
    \item \label{enum:limit-of-x}$\nu_{\act^\ell(\pre^\ell)} \To \act^{\ell}_*(\Normal(0, q_\ell)),$  
    \item \label{enum:limit-of-d} $\nu_{(\act^\ell)^\prime(\pre^\ell)} \To (\act^\ell)^\prime_{*}(\Normal(0, q_\ell)),$
\end{enumerate}
as $N \to \infty$ almost surely.
\end{prop}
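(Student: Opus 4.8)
The plan is to prove the three convergence statements by induction on $\ell$, using the recurrence $\pre^\ell = W_\ell \post^{\ell-1}$ with $W_\ell = \sigma_{w,\ell} O_\ell$ and $\post^{\ell-1} = \act^{\ell-1}(\pre^{\ell-1})$, and feeding each step into \cref{lem:hidden_convergence} and \cref{lemma:function_convergence}. The base case is the input: by assumption (d1) we have $\|\post^0\|_2/\sqrt N \to r = r_0$ almost surely, and by (d3) $\post^0$ is independent of $W_1$, so \cref{lem:hidden_convergence} applies directly to $\pre^1 = \sigma_{w,1} O_1 \post^0$. Note that the $\sigma_{w,1}$ scaling turns the limiting variance $r_0^2$ into $\sigma_{w,1}^2 r_0^2 = q_1$, giving \ref{enum:limit-of-h} for $\ell=1$. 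Then \ref{enum:limit-of-x} and \ref{enum:limit-of-d} for $\ell=1$ follow from \cref{lemma:function_convergence} applied with $g = \act^1$ and $g = (\act^1)^\prime$ respectively, once we check the discontinuity sets are $\Normal(0,q_1)$-null.

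For the inductive step, assume \ref{enum:limit-of-h}, \ref{enum:limit-of-x}, \ref{enum:limit-of-d} hold at level $\ell-1$. From \ref{enum:limit-of-x} at level $\ell-1$ together with the formula $(r_{\ell-1})^2 = \E_{h\sim\Normal(0,q_{\ell-1})}[\act_{\ell-1}(h)^2]$, and the fact that convergence in distribution of $\nu_{\act^{\ell-1}(\pre^{\ell-1})}$ plus a uniform-integrability-type control (coming from \ref{enum:act_L2}, which guarantees the second moment of the limit is finite and, with a bit more care, that the empirical second moments converge) gives
\begin{align}
\frac{1}{N}\sum_{n=1}^N (\post^{\ell-1}_n)^2 = \frac{1}{N}\|\post^{\ell-1}\|_2^2 \to (r_{\ell-1})^2
\end{align}
almost surely, i.e. $\|\post^{\ell-1}\|_2/\sqrt N \to r_{\ell-1}$. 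Since $W_\ell = \sigma_{w,\ell}O_\ell$ is Haar orthogonal and, by (d3) and the construction of the graphical model in \cref{fig:graphical}, independent of $\post^{\ell-1}$ (which is a measurable function of $\post^0, W_1, \dots, W_{\ell-1}$), we may apply \cref{lem:hidden_convergence} to $\pre^\ell = \sigma_{w,\ell} O_\ell \post^{\ell-1}$ to conclude $\nu_{\pre^\ell}\To\Normal(0, \sigma_{w,\ell}^2 r_{\ell-1}^2) = \Normal(0, q_\ell)$, which is \ref{enum:limit-of-h} at level $\ell$. Then \cref{lemma:function_convergence} with $g=\act^\ell$ and $g=(\act^\ell)^\prime$ yields \ref{enum:limit-of-x} and \ref{enum:limit-of-d} at level $\ell$, provided the relevant discontinuity sets are null.

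It remains to handle the two hypotheses needed to invoke \cref{lemma:function_convergence}: that $\act^\ell$ and $(\act^\ell)^\prime$ have $\Normal(0,q_\ell)$-null discontinuity sets. For $\act^\ell$ this is immediate from (a1) (continuity everywhere). For $(\act^\ell)^\prime$ we use (a4): the derivative is continuous almost everywhere with respect to Lebesgue measure, so its discontinuity set is Lebesgue-null, hence $\Normal(0,q_\ell)$-null since the Gaussian is absolutely continuous and $q_\ell > 0$; here $q_\ell > 0$ holds because $\sigma_{w,\ell} > 0$ and $r_{\ell-1} > 0$, the latter following inductively from (a1) (the activation is not identically zero, so its push-forward of a nondegenerate Gaussian has positive second moment).

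The main obstacle I anticipate is the passage from convergence in distribution of $\nu_{\act^{\ell-1}(\pre^{\ell-1})}$ to convergence of the \emph{second moments} $\frac1N\|\post^{\ell-1}\|_2^2 \to r_{\ell-1}^2$, which is exactly the hypothesis \cref{lem:hidden_convergence} needs at the next level. Weak convergence alone does not transfer unbounded test functions like $t\mapsto t^2$; one needs a uniform integrability argument. The natural route is to note that $\frac1N\sum_n (\post^{\ell-1}_n)^2$ is, by the $\sim^{dist.}$ representation used in the proof of \cref{lem:hidden_convergence}, equal in distribution to $r(N)^2 \cdot \frac{1}{N}\sum_n (\sqrt N u(N)_n)^2 \act^{\ell-1}(\cdots)$-type expressions — more cleanly, one can track the exact quantity $\frac1N\|\act^{\ell-1}(\pre^{\ell-1})\|_2^2$ through the Gaussian-vector representation $\pre^{\ell-1} \sim^{dist.} r_{\ell-2}(\sqrt N u(N))$ and apply a strong law of large numbers for the i.i.d.\ Gaussian coordinates, using \ref{enum:act_L2} to guarantee $\E[\act^{\ell-1}(\Normal(0,q_{\ell-1}))^2] < \infty$ so the SLLN applies. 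This bookkeeping — essentially strengthening \cref{lem:hidden_convergence} to also report almost-sure convergence of the empirical second moment of $g(h(N))$ whenever $g$ is Gaussian-square-integrable — is the one place where genuine care is required; everything else is a mechanical induction.
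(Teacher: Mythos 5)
Your proposal is correct and follows essentially the same route as the paper: induction on $\ell$, with \cref{lem:hidden_convergence} giving the Gaussian limit of $\nu_{\pre^\ell}$ (after absorbing the $\sigma_{w,\ell}$ scaling into $q_\ell$) and \cref{lemma:function_convergence} handling $\act^\ell$ and $(\act^\ell)'$ via their null discontinuity sets. The only difference is that you explicitly flag and sketch the passage from weak convergence of $\nu_{\act^{\ell-1}(\pre^{\ell-1})}$ to convergence of its empirical second moment, a step the paper asserts in a single line ($\|\post^\ell\|_2/\sqrt N = \sqrt{m_2(\nu_{\act^\ell(\pre^\ell)})} \to r_\ell$) without the uniform-integrability/SLLN justification you outline.
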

\begin{proof}
The proof is  by induction on $\ell$.
Let $\ell=1$.  Then $q_1 = \sigma_{w,1}^2 r^2 + \sigma_{b,1}^2$.
By \cref{lem:hidden_convergence},  \ref{enum:limit-of-h} follows.
Since $\act^1$ is continuous \ref{enum:limit-of-x} follows by \cref{lemma:function_convergence}. 
Since $(\act^1)^\prime$ is continuous almost everywhere by the assumption \eqref{enum:act_deriv_conti}, \ref{enum:limit-of-d} follows by  \cref{lem:hidden_convergence}.
Now we have $||\post^1||_2/\sqrt{N} = \sqrt{m_2(\nu_{\act^1(\pre^1)})} \To\sqrt{ m_2(\act^1_*(\Normal(0,q_1))) }= r_1$.
The same conclusion can be drawn for the rest of  induction.
\end{proof}

\begin{cor}\label{cor:limit-D}
 For each $\ell=1, \dots, L$, $D_\ell$ has the compactly supported limit spectral distribution $(\act^\ell)^\prime_{*}(\Normal(0, q_\ell))$ as $N \to \infty$.
\end{cor}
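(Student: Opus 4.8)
The statement to prove, \cref{cor:limit-D}, asserts that for each $\ell$, the diagonal matrix $D_\ell = \diag((\act^\ell)'(\pre^\ell_1), \dots, (\act^\ell)'(\pre^\ell_N))$ has a compactly supported limit spectral distribution equal to $(\act^\ell)'_*(\Normal(0,q_\ell))$.

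\medskip

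The plan is to deduce this directly from \cref{prop:signal}, specifically from part \ref{enum:limit-of-d}, together with the structural fact that the spectral distribution of a diagonal matrix is exactly its empirical entry distribution. First I would observe that since $D_\ell$ is diagonal (hence symmetric), its eigenvalues are precisely its diagonal entries $(\act^\ell)'(\pre^\ell_n)$, $n=1,\dots,N$, so the spectral distribution of $D_\ell$ coincides with the empirical measure $\nu_{(\act^\ell)'(\pre^\ell)}$. Then \ref{enum:limit-of-d} says exactly that this converges weakly, almost surely, to $(\act^\ell)'_*(\Normal(0,q_\ell))$ as $N\to\infty$. It remains only to check that this limit measure is compactly supported, which is immediate from assumption \ref{enum:act_deriv_bounded}: $(\act^\ell)'$ is bounded, say $|(\act^\ell)'| \le M_\ell$ a.e., so the pushforward $(\act^\ell)'_*(\Normal(0,q_\ell))$ is supported in $[-M_\ell, M_\ell]$. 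One should note the minor subtlety that $(\act^\ell)'$ is only defined almost everywhere and the pushforward under a Gaussian is well-defined because $\Normal(0,q_\ell)$ is absolutely continuous with respect to Lebesgue measure (so the null set where $(\act^\ell)'$ is undefined carries no mass), and boundedness a.e.\ suffices for the support claim.

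\medskip

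There is no real obstacle here; the corollary is essentially a repackaging of \ref{enum:limit-of-d}. The only points requiring a sentence of care are: (i) translating "limit spectral distribution" in the sense of convergence of all moments (as in the definitions of convergence in distribution of random matrices given earlier) versus the weak convergence $\To$ used in \cref{prop:signal} — but since the limit is compactly supported, weak convergence and moment convergence are equivalent, and also convergence of the empirical spectral distribution of uniformly bounded matrices in moments follows from weak convergence; and (ii) confirming that the boundedness of $(\act^\ell)'$ also gives a uniform almost-sure bound on the operator norm of $D_\ell$, namely $\|D_\ell\|_{\mathrm{op}} \le M_\ell$ for all $N$, which is what justifies passing freely between the moment formulation and the weak formulation. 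So the proof I would write is short: invoke diagonality to identify spectral distribution with $\nu_{(\act^\ell)'(\pre^\ell)}$, invoke \ref{enum:limit-of-d} for the weak limit, and invoke \ref{enum:act_deriv_bounded} for compact support and the uniform norm bound.
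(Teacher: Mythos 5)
Your proof is correct and follows the same route as the paper, which simply derives the corollary from \ref{enum:limit-of-d} together with the boundedness assumption \ref{enum:act_deriv_bounded}; your additional remarks on diagonality, the a.e.\ definition of $(\act^\ell)^\prime$, and the equivalence of weak and moment convergence for compactly supported limits just make explicit what the paper leaves implicit.
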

\begin{proof}
The assertion  follows directly from \cref{enum:limit-of-d} and \ref{enum:act_deriv_bounded}.
\end{proof}

\section{Key to Asymptotic Freeness}\label{sec:keys}

Here we introduce key lemmas to prove the asymptotic freeness. A key lemma is about an invariance of MLP, and the other one is about a property of cutting off matrices.

\subsection{Notations}\label{ssec:notations}
We prepare notations related to the change of basis to cut off  entries in $W_\ell$, which are correlated with $D_\ell$.

For $N \in \N$,  fix a standard complete orthonormal basis $(e_1, \dots, e_N)$ of $\R^N$. 
Firstly, set
    $\hat{n} = \min \{ n =1, \dots, N \mid \langle x^\ell, e_n  \rangle \neq 0\}$.
Since $\post^\ell$ is non-zero almost surely, $\hat{n}$ is defined almost surely.
Then the following family is a basis of $\R^N$: 
\begin{align}\label{align:family}
(e_1, \dots, e_{\hat{n} - 1}, e_{\hat{n} + 1}, \dots, e_N, \post^\ell/||\post^\ell||_2),
\end{align}
where $||\cdot ||_2$ is the Euclidian norm. Secondly, we apply the Gram-Schmidt orthogonalization to the basis \eqref{align:family} in reverse order, starting with $x^\ell/||x^\ell||_2$, to constrcut an orthonormal basis $(f_1, \dots, f_N=x^\ell/||x^\ell||_2)$.
Thirdly, let $Y_\ell$ be the orthogonal matrix determined by the following change of orthonormal basis:
\begin{align}\label{align:Y_defn}
    Y_\ell f_n = e_n  \ (n=1, \dots, N).
\end{align}
Then $Y_\ell$ satisfies the following conditions.
\begin{enumerate}
    \item $Y_\ell$ is $x^{\ell}$-measurable.
    \item $Y_\ell \post^\ell = ||\post^\ell||_2 e_N$.
\end{enumerate}
Lastly, let $V_0,  \dots, V_{L-1}$ be independent Haar distributed $ N-1 \times N-1$  orthogonal random matrices such that all entries of them are independent of that of   $(\post^0, W_1, \dots, W_L)$.
Set 
\begin{align}\label{align:practical_U}
    U_\ell = Y_\ell^\top \begin{pmatrix}
    V_\ell & 0 \\
    0 & 1
    \end{pmatrix} Y_\ell.
\end{align}
Then 
\begin{align}\label{align:practical_U_fixpoint}
    U_\ell \post^\ell = Y_\ell^\top ||\post^\ell||_2 e_N = \post^\ell.
\end{align}
Each $V_{\ell}$ is the $N-1 \times N-1$ random matrix which determines the action of $U_{\ell}$ on the orthogonal complement of $\R \post^{\ell}$.
Further, for any $\ell=0, \dots, L-1$, 
all entries of $(U_0, \dots, U_{\ell-1})$ are independent from that of  $(W_\ell, \dots, W_L)$ since each $U_{\ell}$ is $\mc{G}(x^\ell, V^\ell)$-measurable, where $\mc{G}(x^\ell, V^\ell)$ is the $\sigma$-algebra generated by $x^\ell$ and $V^\ell$.
We have completed the construction of the $U_\ell$.
\cref{fig:graphical-V} visualizes a dependency of the random variables that appeared in the above discussion.

\begin{figure}[h]
    \centering
    \begin{tikzpicture}
    \node [circ] (x0) at (-3,0) { $\boldsymbol{x}^{0}$};
    \node [box] (ws) at (-3,3) { $\boldsymbol{W}_{1}, \dots, \boldsymbol{W}_{\ell-1}$};

    \node [circ] (xl-1) at (0,0) { $\boldsymbol{x}^{\ell-1}$};
    \node [box] (ws2) at (4,3) { $\boldsymbol{W}_{\ell}, \dots, \boldsymbol{W}_{L}$};

    \node [box] (ul-1) at (0,3) { $\boldsymbol{U}_{\ell-1}$};
    \node [box] (up) at (1.5,3) { $\boldsymbol{V}_{\ell-1}$};


    \draw[every loop,
          auto=left,
          line width=0.2mm,
          >=latex,
          draw=black,
          fill=black]    
        (x0) edge (xl-1)
        (ws) edge (xl-1)
        (xl-1) edge (ul-1)
        (up) edge (ul-1)

        ;

\end{tikzpicture}
    \caption{A graphical model of random variables in a specific case using $V_\ell$ for $U_\ell$.  See \cref{fig:graphical} for the graph's drawing rule.  The node of $W_\ell, \dots, W_L$ is an isolated node in the graph.}
    \label{fig:graphical-V}
\end{figure}
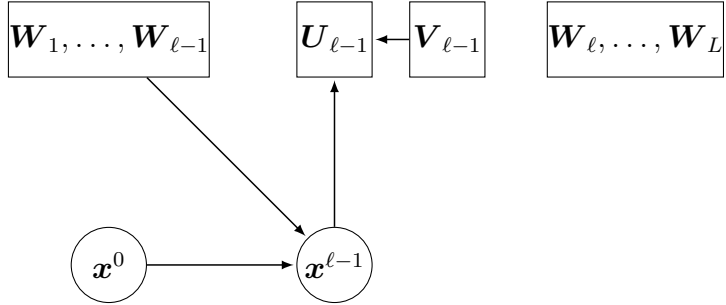

In addition, let $P(N)$ be the $N \times N$ diagonal matrix given by 
\begin{align}\label{align:P}
    P(N) =  \mathrm{diag}(1,1,\dots, 1, 0).
\end{align}
If there is no confusion, we omit the index $N$ and simply write it $P$. The matrix $P(N)$ is an orthogonal projection onto an $N-1$ dimenstional subspace.

\subsection{Invariance of MLP}\label{ssec:inv}
Since Haar random matrices' invariance leads to asymptotic freeness (\cref{prop:af}), it is essential to investigate the network's invariance. 
The following invariance is the key to the main theorem.
Note that the Haar property of $V_\ell$  is not necessary to construct $U_\ell$ in \cref{lemma:Ux}, but the property is used in the proof of \cref{thm:main}.

\begin{lemma}\label{lemma:Ux}
Under the setting of \cref{ssec:setting},
 let $U_\ell$ be arbitrary $\O{N}$ valued random matrix satisfying
\begin{align}\label{align:Ux-x}
    U_\ell \post^\ell = \post^\ell.
\end{align}
for each $\ell=0,1, \dots, L-1$.
Further assume that all entries of $(U_0, \dots, U_{\ell-1})$ are independent from that of $(W_\ell, \dots, W_L)$ for each $\ell=0,1,\dots, L-1$.
Then the following holds:
\begin{align}\label{align:joint-distributions}
(W_1U_0, \dots, W_LU_{L-1}, \pre^1, \dots, \pre^L)  \entries (W_1, \dots, W_L, \pre^1, \dots, \pre^L).
\end{align}
\end{lemma}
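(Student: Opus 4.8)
The plan is to prove the statement by induction on $L$, peeling off one layer at a time from the input side. The key observation is that the hypothesis \eqref{align:Ux-x} says $U_\ell$ fixes $\post^\ell$, which is exactly the vector that gets fed into $W_{\ell+1}$; so inserting $U_\ell$ right after $W_{\ell+1}$ changes nothing about the forward pass from layer $\ell+1$ onward. More precisely, I would first record the elementary identity that, for the modified network with weights $(W_1 U_0, W_2 U_1, \dots, W_L U_{L-1})$, the preactivations and postactivations are \emph{literally unchanged}: writing $\tilde\pre^\ell, \tilde\post^\ell$ for the quantities produced by the modified weights, one shows $\tilde\pre^\ell = \pre^\ell$ and $\tilde\post^\ell = \post^\ell$ for all $\ell$ by induction on $\ell$, using $\tilde\pre^1 = W_1 U_0 \post^0 = W_1 \post^0 = \pre^1$ (since $U_0 \post^0 = \post^0$) and then $\tilde\pre^{\ell+1} = W_{\ell+1} U_\ell \tilde\post^\ell = W_{\ell+1} U_\ell \post^\ell = W_{\ell+1}\post^\ell = \pre^{\ell+1}$. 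This is a pathwise (almost-sure) identity, not merely a distributional one.

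Granting this, the content of \eqref{align:joint-distributions} reduces to showing that the joint law of all entries of $(W_1 U_0, \dots, W_L U_{L-1}, \pre^1, \dots, \pre^L)$ equals that of $(W_1, \dots, W_L, \pre^1, \dots, \pre^L)$. Here I would use the Haar invariance of the $W_\ell$ together with the independence hypothesis, proceeding by conditioning. Condition first on everything except $W_L$: by assumption all entries of $(U_0, \dots, U_{L-1})$ are independent of $W_L$, and $W_L$ is Haar (times the scalar $\sigma_{w,L}$), so $W_L U_{L-1}$ has the same conditional law as $W_L$ given $(U_0, \dots, U_{L-1}, W_1, \dots, W_{L-1})$; moreover $\pre^1, \dots, \pre^L$ are unchanged by the substitution as noted above, and $\pre^1, \dots, \pre^{L-1}$ do not involve $W_L$ at all while $\pre^L = \pre^L$ identically. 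This lets one replace $W_L U_{L-1}$ by $W_L$ inside the joint distribution. One then iterates downward: having reduced to $(W_1 U_0, \dots, W_{L-1} U_{L-2}, W_L, \pre^1, \dots, \pre^L)$, condition on all but $W_{L-1}$ and use that $(U_0,\dots,U_{L-2})$ is independent of $(W_{L-1}, \dots, W_L)$ to swap $W_{L-1}U_{L-2}$ for $W_{L-1}$, and so on down to layer $1$. Structurally this is the same device as \cref{lemma:UV}, and in fact one could try to phrase the whole argument as an application of that lemma with $T_\ell = I$, $S_\ell = U_{\ell-1}$, $V_\ell = W_\ell/\sigma_{w,\ell}$, and $A_\ell = \pre^\ell$ — except that the $A_\ell$ here are \emph{not} independent of the $V_\ell$, which is precisely why the layer-by-layer conditioning (exploiting that $U_{\ell-1}$ depends only on $W_1,\dots,W_{\ell-1}$ and $\post^0$, not on $W_\ell,\dots,W_L$) is needed rather than a black-box citation.

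The main obstacle, and the point that needs care, is exactly this entanglement: $\pre^\ell$ depends on $W_1, \dots, W_\ell$, so when I condition in order to freeze $W_\ell$ and rotate it, I must be sure that the objects I am holding fixed — namely $U_{\ell-1}$ and the lower preactivations $\pre^1, \dots, \pre^{\ell-1}$ — are measurable with respect to a $\sigma$-algebra independent of $W_\ell$, while $\pre^\ell, \dots, \pre^L$ get carried along correctly as deterministic functions of the conditioned data. The hypothesis "$(U_0, \dots, U_{\ell-1})$ independent of $(W_\ell, \dots, W_L)$ for each $\ell$" is tailored to make this work, and I would spell out carefully which variables sit in which $\sigma$-algebra at each stage. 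Once the bookkeeping is set up, each step is just the left-invariance of Haar measure applied conditionally, and the induction closes.
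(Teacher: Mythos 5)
Your proposal is correct and is essentially the paper's own argument: the paper also swaps $W_\ell U_{\ell-1}$ for $W_\ell$ one layer at a time, from $\ell=L$ down to $\ell=1$, using the tower property, the independence hypothesis, and right-invariance of the Haar measure conditionally on the past, with the fixed-point property $U_{\ell-1}\post^{\ell-1}=\post^{\ell-1}$ ensuring the preactivations are carried along unchanged. The only difference is presentational — the paper implements the same conditioning scheme at the level of characteristic functions (the quantities $\alpha_\ell,\beta_\ell$) rather than directly on conditional laws — and your remark that \cref{lemma:UV} cannot be cited as a black box matches the paper's reasoning.
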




\begin{proof}[Proof of \cref{lemma:Ux}]
Let $U_0, \dots, U_{L-1}$ be arbitrary random matrices satisfing conditions in \cref{lemma:Ux}.
We prove the corresponding characteristic functions of the joint distributions in \eqref{align:joint-distributions} match.

%
Fix $T_1, \dots, T_L \in M_N(\R)$ and $\xi_1, \dots, \xi_L \in \R^N$. 
For each $\ell = 1, \dots, L$, define a map $\psi_\ell$ by 
\begin{align}
    \psi_\ell(x,W) = \exp\left[ -i  \Tr(T_\ell^\top W) -i \langle \xi_\ell, Wx\rangle \right],
\end{align}
where $W \in M_N(\R)$ and $x \in \R^N$.
Write 
\begin{align}
    \alpha_\ell &= \psi_\ell(\post_{\ell-1}, W_\ell), \label{align:alpha}\\
    \beta_\ell  &= \psi_\ell(\post_{\ell-1}, W_\ell U_{\ell-1}). \label{align:beta}
\end{align}
By \eqref{align:Ux-x} and by $W_\ell \post^{\ell-1} = \pre^\ell$, the values of characteristic functions of the joint distributions at the point $(T_1, \dots, T_L, \xi_1, \dots \xi_L)$ is given by 
$\E [\beta_1 \dots \beta_L]$ and $\E [\alpha_1 \dots \alpha_L]$, respectively.
%
%
Now we only need to show
\begin{align}\label{align:alpha-beta}
    \E [\beta_1 \dots \beta_L] = \E [\alpha_1 \dots \alpha_L].
\end{align}

Firstly, we claim that the following holds: for each $\ell=1, \dots, L$,
\begin{align}\label{align:ab-claim-2}
    \E[\beta_\ell \alpha_{\ell+1} \dots \alpha_L | \post^{\ell-1}] = \E[\alpha_\ell \alpha_{\ell+1} \dots \alpha_L | \post^{\ell-1}].
\end{align}
To show \eqref{align:ab-claim-2}, fix $\ell$ and write  for a random variable $x$, 
\begin{align}
    \mc{J}(x) = \E[\alpha_{\ell+1} \dots \alpha_L | x].
\end{align}
By the tower property of conditional expectations, we have
\begin{align}\label{align:tower-beta}
\E[\beta_\ell \alpha_{\ell+1} \dots \alpha_L | \post^{\ell-1}]=    \E[\beta_\ell \mc{J}(\post^{\ell}) | \post^{\ell-1}] &= \E[\E[\beta_\ell \mc{J}(\post^{\ell}) | \post^{\ell-1}, U_{\ell-1}] | \post^{\ell-1} ].
\end{align}
Let $\mu$ be the Haar measure. Then by the invariance of the Haar measure, we have
\begin{align}    
\E[\beta_\ell \mc{J}(\post^{\ell}) | \post^{\ell-1}, U_{\ell-1}]  
    &= \int \psi_\ell(\post^{\ell-1}, W U_{\ell-1}) \mc{J}(\phi_\ell(W U_{\ell-1} \post^{\ell-1})) \mu(dW)\\
    &=\int \psi_\ell(\post^{\ell-1}, W) \mc{J}(\phi_\ell(W \post^{\ell-1})) \mu(dW) \\
    & =\int \alpha_\ell \mc{J}(\post^\ell) \mu(dW)\\
    & = \E[\alpha_\ell \E[ \alpha_{\ell+1 } \dots \alpha_L | \post^\ell ] | \post^{\ell-1}]\\
    & = \E[\alpha_\ell \alpha_{\ell+1 } \dots \alpha_L | \post^{\ell-1}].
\end{align}
In particular, $\E[\beta_\ell \mc{J}(\post^{\ell}) | \post^{\ell-1}, U_{\ell-1}]$ is 
$x^{\ell-1}$-measurable.  By \eqref{align:tower-beta}, we have \eqref{align:ab-claim-2}.

Secondly, we claim that for each $\ell=2, \dots, L$,
\begin{align}\label{align:ab-claim-3}
    \E[ \beta_1 \dots \beta_{\ell-1}\beta_\ell \alpha_{\ell+1} \dots \alpha_L] = \E[\beta_1 \dots \beta_{\ell-1} \alpha_\ell \alpha_{\ell+1}\dots \alpha_L].
\end{align}
Denote by $\mc{G}$ the $\sigma$-algebra generated by $(x_0, W_1, \dots, W_{\ell-1}, U_0, \dots, U_{\ell-2})$.
By definition, $\beta_1, \dots, \beta_{\ell-1}$ are $\mc{G}$-measurable.
Therefore,
\begin{align}
    \E[\beta_1 \dots \beta_{\ell-1}\beta_\ell \alpha_{\ell+1} \dots \alpha_L] &= \E[\beta_1 \dots \beta_{\ell-1} \E[\beta_\ell \alpha_{\ell+1}\dots  \alpha_L | \mc{G} ] ] .
\end{align}
Now we have
\begin{align}
    \E[\beta_\ell \alpha_{\ell+1}\dots  \alpha_L | \mc{G}] &= \E[\beta_\ell \alpha_{\ell+1}\dots  \alpha_L | \post^{\ell-1} ],\\
    \E[\alpha_\ell \alpha_{\ell+1}\dots  \alpha_L | \mc{G}] &= \E[\alpha_\ell \alpha_{\ell+1}\dots  \alpha_L | \post^{\ell-1} ],
\end{align}
since the generators of $\mc{G}$ needed to determine $\beta_\ell, \alpha_\ell, \alpha_{\ell+1}, \dots \alpha_L$  are coupled into $\post^{\ell-1}$.
Therefore, by \eqref{align:ab-claim-2}, we have 
\begin{align}
 \E[\beta_1 \dots \beta_{\ell-1} \E[\beta_\ell \alpha_{\ell+1}\dots  \alpha_L | \mc{G} ] ]    &= \E[\beta_1 \dots \beta_{\ell-1} \E[\alpha_\ell \alpha_{\ell+1} \dots \alpha_L | \mc{G}]  ]\\
 &=\E[\beta_1 \dots \beta_{\ell-1} \alpha_\ell \alpha_{\ell+1} \dots \alpha_L ].
\end{align}
Therefore, we have proven \eqref{align:ab-claim-3}.

Lastly,  by applying \eqref{align:ab-claim-3} iteratively, we have
\begin{align}
    \E[ \beta_1 \beta_2 \dots \beta_L] = \E[ \beta_1 \alpha_2 \dots \alpha_L].
\end{align}
By \eqref{align:ab-claim-2}, 
\begin{align}
     \E[ \beta_1 \alpha_2 \dots \alpha_L] = \E[\E[\beta_1 \alpha_2 \dots \alpha_L | \post^0 ] ]=  \E[ \E[\alpha_1 \alpha_2 \dots \alpha_L | \post^0] ] = \E[\alpha_1 \alpha_2 \dots \alpha_L ].
\end{align}
We have completed the proof of \eqref{align:alpha-beta}.
\end{proof}

Here we visualize the dependency of the random variables in \cref{fig:graphical-char} in the case of the specific $(U_{\ell})_{\ell=0}^{L-1}$ in \eqref{align:practical_U} constructed with $(V_\ell)_{\ell=0}^{L-1}$. Note that we do not use the specific construction in the proof of \cref{lemma:Ux}.

\begin{figure}[h]
    \centering
    \begin{tikzpicture}
    \node [circ] (x0) at (-3,0) { $\boldsymbol{x}^{0}$};
    \node [box] (ws) at (-3,3) { $\boldsymbol{W}_{1}, \dots, \boldsymbol{W}_{\ell-1}$};

    \node [circ] (xl-1) at (0,0) { $\boldsymbol{x}^{\ell-1}$};
    \node [circ] (al) at (3,1.5) { $\boldsymbol{\alpha}_\ell$};
    \node [circ] (bl) at (1.5,1.5) { $\boldsymbol{\beta}_\ell$};
    \node [box] (wl) at (3,3) { $\boldsymbol{W}_\ell$};
    \node [box] (ul-1) at (0,3) { $\boldsymbol{U}_{\ell-1}$};
    \node [box] (up) at (1.5,3) { $\boldsymbol{V}_{\ell-1}$};


    \draw[every loop,
          auto=left,
          line width=0.2mm,
          >=latex,
          draw=black,
          fill=black]    
        (x0) edge (xl-1)
        (ws) edge (xl-1)
        (xl-1) edge (al)
        (xl-1) edge (bl)
        (xl-1) edge (ul-1)
        (up) edge (ul-1)

        (ul-1) edge (bl)

        (wl) edge (bl)
        (wl) edge (al)
        
        ;

\end{tikzpicture}
    \caption{A graphical model of random variables for computing characteristic functions in a specific case using $V_\ell$ for constructing $U_\ell$. See \eqref{align:alpha} and \eqref{align:beta} for the definition of $\alpha_\ell$ and $\beta_\ell$. See \cref{fig:graphical} for the graph's drawing rule. }
    \label{fig:graphical-char}
\end{figure}
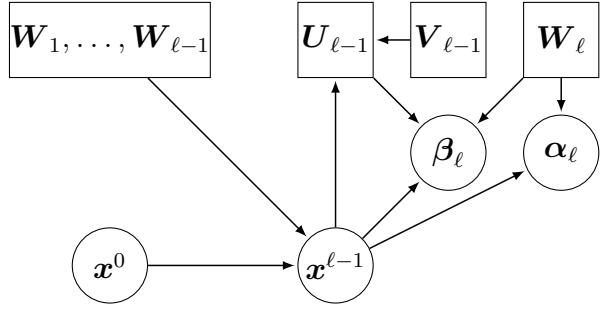

\subsection{Matrix Size Cutoff}
The invariance described in \cref{lemma:UV} fixes the vector  $\post^{\ell-1}$, and there are no restrictions on the remaining $N-1$ dimensional space $ P(N) \R^N$.
We call $P(N)AP(N)$ the catoff of any $N\times N$ matrix  $A$.
This section quantifies that cutting off the fixed space causes no significant effect when taking the large-dimensional limit.

For $p \geq 1$, we denote by $||X||_p$ the $L^p$-norm of $X \in M_N(\R)$ defined by 
\begin{align}
    || X ||_p= (\tr  |X|^p )^{1/p} = \left[\tr\left[\left(\sqrt{X^\top X}\right)^p\right]\right]^{1/p}.
\end{align}
Recall that the following non-commutative H{\"{o}}lder's inequality holds:
\begin{align}\label{align:Holder}
    || XY ||_r \leq || X ||_p || Y ||_q,
\end{align}
for any $r,p,q \geq 1$ with $1/r = 1/p + 1/q$.

\begin{lemma}\label{lemma:P}
Fix $n \in \N$. Let $X_1(N), \dots, X_n(N)$ be $N \times N$ random matrices for each $N \in \N$. Assume that there is a constant $C > 0$  satisfying almost surely
\begin{align}
    \sup_{N \in \N} \sup_{j=1, \dots, n}{||X_j(N)||_n } \leq  C.
\end{align}
Let $P(N)$ be the orthogonal projection defined in \eqref{align:P}.
Then we have almost surely
\begin{align}\label{align:PXP-X}
    |\tr[P(N)X_1(N)P(N) \dots P(N)X_n(N)P(N)]  - \tr[X_1(N) \dots X_n(N)] | \leq \frac{nC^n}{N^n}.
\end{align}
In particular, the left-hand side of \eqref{align:PXP-X} goes to $0$ as $N \to \infty$ almost surely.
\end{lemma}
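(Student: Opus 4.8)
The idea is to write $P(N) = I - e_N e_N^\top$ and expand the product
$P X_1 P \cdots P X_n P$ into a sum of $2^n$ (or so) terms, the leading one being $X_1 \cdots X_n$ and all the others containing at least one factor $e_N e_N^\top$. I would then bound the normalized trace of each error term. The key mechanism: whenever $e_N e_N^\top$ appears, the trace $\tr$ of a product of $N\times N$ matrices interspersed with rank-one projections $e_N e_N^\top$ picks up a factor $1/N$ from the normalization while the surviving quantity is a product of matrix entries of the $X_j$'s sandwiched between the unit vector $e_N$. Concretely, a term like $\tr(X_{i_1} e_N e_N^\top X_{i_2} e_N e_N^\top \cdots)$ equals $\frac1N$ times a product of scalars of the form $e_N^\top X_{j} e_N$ or more general $e_N^\top X_{j_1}X_{j_2}\cdots X_{j_m} e_N$, each of which is bounded in absolute value by the operator norm of the relevant product, hence by a product of operator norms of the $X_j$.

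The one subtlety is that the hypothesis controls $\|X_j(N)\|_n = (\tr|X_j|^n)^{1/n}$, the \emph{normalized} Schatten-$n$ norm, not the operator norm; so $e_N^\top X_j e_N$ is \emph{not} directly bounded by $C$. This is where the non-commutative Hölder inequality \eqref{align:Holder} enters. I would handle each error term by keeping the $e_N e_N^\top$ factors together with their neighbouring $X_j$'s and estimating the trace of the full product by Hölder: a product of $n$ factors (the $X_j$'s), with $r=1$, $p_1=\cdots=p_n=n$, gives $|\tr(Y_1\cdots Y_n)|\le \|Y_1\|_n\cdots\|Y_n\|_n$ where each $Y_j$ is either $X_j$ or $X_j$ multiplied by one or two copies of $P$ or $e_Ne_N^\top$; since $\|P\|_\infty=1$ and $\|e_Ne_N^\top\|_n = N^{-1/n}$, each inserted rank-one projection contributes a factor $N^{-1/n}$ and each $X_j$ contributes at most $C$. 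A term with $k\ge 1$ inserted copies of $e_Ne_N^\top$ is then bounded by $C^n N^{-k/n}\le C^n N^{-1/n}$; summing over the at most $2^n$ such terms (crudely, and absorbing constants) yields a bound of order $C^n/N^{1/n}$.

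Two bookkeeping points I would be careful about. First, the exponent in the stated bound $nC^n/N^n$ is more generous than what this argument gives ($N^{-1/n}$ versus $N^{-n}$ versus the possibly-intended $N^{-1}$); in any case any bound of the form $\mathrm{const}\cdot C^n \cdot N^{-\epsilon}$ with $\epsilon>0$ suffices for the "in particular" conclusion, so I would simply prove whatever clean bound the Hölder estimate produces and note it tends to $0$. Second, all the norm hypotheses hold only on an almost-sure event (the one on which $\sup_N\sup_j\|X_j(N)\|_n\le C$); I would fix an $\omega$ in that event at the outset and run the entire deterministic estimate there, so the conclusion holds on the same almost-sure event. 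The main obstacle is purely the combinatorial-cum-Hölder bookkeeping of the expansion — ensuring every one of the error terms genuinely carries at least one factor of $N^{-1/n}$ after Hölder and that the count of terms contributes only a tame constant; there is no conceptual difficulty beyond that.
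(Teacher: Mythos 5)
Your route is the same one the paper takes: replace $P$ by the identity minus the rank-one projection $e_Ne_N^\top$ (the paper telescopes, inserting a single factor $P-1$ per term, which is why its constant is $n$ rather than your $2^n$), then estimate each error term by the non-commutative H\"older inequality \eqref{align:Holder}, extracting the decay from $\|1-P\|_n=N^{-1/n}$. You are also right that the displayed rate $nC^n/N^n$ is not what this argument produces; the paper's own proof ends with $nC^n/N^{1/n}$, and only the convergence to $0$ is ever used.

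However, the H\"older bookkeeping you propose does not close, and the obstruction is exactly the subtlety you flagged. With the normalized trace, iterating \eqref{align:Holder} bounds $\|Y_1\cdots Y_m\|_1$ by $\prod_i\|Y_i\|_{p_i}$ only when $\sum_i 1/p_i\le 1$. Since each $X_j$ is controlled only in Schatten-$n$ norm, it must be given an exponent $\le n$, so the $n$ matrices already use up the whole budget; consequently the inserted rank-one factor can only be measured in operator norm, $\|e_Ne_N^\top\|_\infty=1$, and produces no $N^{-1/n}$. Concretely, for a group $Y_j=X_je_Ne_N^\top$ the only available estimate is $\|Y_j\|_n\le\|X_j\|_n\|e_Ne_N^\top\|_\infty\le C$, whereas getting $\|e_Ne_N^\top\|_q=N^{-1/q}$ with $q<\infty$ would force some $X_j$ into a Schatten norm of index larger than $n$, which is not assumed. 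This is not a repairable bookkeeping issue: under the stated hypotheses the conclusion itself fails. Take $X_1(N)=\dots=X_n(N)=CN^{1/n}e_Ne_N^\top$; then $\|X_j(N)\|_n=C$, $\tr[PX_1P\cdots PX_nP]=0$, while $\tr[X_1\cdots X_n]=C^n$, so the left-hand side of \eqref{align:PXP-X} is the constant $C^n$. (The paper's own proof has the same defect: it bounds $\|T\|_1$ by a product of $n+1$ or more Schatten-$n$ norms, i.e.\ with reciprocal exponents summing to more than $1$.) The statement and both arguments are repaired by assuming a uniform bound on the operator norms $\|X_j(N)\|_\infty\le C$ -- which is what actually holds in every application in the paper, where the matrices involved are orthogonal matrices, projections, and diagonal matrices with uniformly bounded entries -- and then your expansion goes through verbatim: measure the rank-one insertions in the Schatten-$1$ norm, $\|e_Ne_N^\top\|_1=1/N$, and everything else in operator norm, giving a bound of the form $\mathrm{const}\cdot C^n/N$, which more than suffices for the limiting statement.
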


\begin{proof}
We omit the index $N$ if there is no confusion.
Set
\begin{align}
    T = \sum_{j=0}^{n-1} PX_1 \cdots PX_{k-j-1}( P-1 )X_{n-j} X_{n-j+1} \cdots X_n.
\end{align}
Then the left-hand side of \eqref{align:PXP-X} is equal to $|\tr T|$.
By the H\"older's inequality \eqref{align:Holder}, 
\begin{align}
    |\tr T| \leq ||T||_1 \leq \sum_{j=0}^{n-1} ||P||_n^{n-j-1} ||X_1||_n \cdots  ||X_n||_n  || P -1 ||_n.
\end{align}
Now
\begin{align}
    || P -1 ||_n  = (\frac{1^n}{N})^{1/n} =  \frac{1}{N^{1/n}}.
\end{align}
Then by the assumption, we have $|\tr T| \leq nC^n/N^{1/n}$ almost surely.  
\end{proof}

By \cref{lemma:P}, the cutoff $P(N)XP(N)$ approximate $X$ in the sence of polynomials.
Next, we check that an orthogonal matrix approximates the cutoff of any orthogonal matrix. 
\begin{lemma}\label{lemma:PWP}
Let $N\in\N$ and $N \geq 2$. For any $\O{N}$ valued random matrix $W$, there is $W$-measurable $\O{N-1}$ valued random matrix $\grave{W}$ satisfying 
\begin{align}\label{align:pwp-u}
    || PWP - \begin{pmatrix} \grave{W} & 0 \\ 0 & 0 \end{pmatrix} ||_p \leq \frac{1}{(N-1)^{1/p}},
\end{align}
for any $p \in \N$ almost surely
\end{lemma}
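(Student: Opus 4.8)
The goal is to show that the cutoff $PWP$ of an orthogonal matrix $W$ is close, in every $L^p$-norm, to (the embedding of) some $(N-1)\times(N-1)$ orthogonal matrix. My plan is to construct $\grave W$ explicitly by a "rounding" procedure: take the upper-left $(N-1)\times(N-1)$ block of $W$, call it $B$, polar-decompose it, and let $\grave W$ be the orthogonal factor. Concretely, write $W$ in block form as $W = \begin{pmatrix} B & c \\ d^\top & w_{NN}\end{pmatrix}$ with $B \in M_{N-1}(\R)$, $c,d \in \R^{N-1}$. Then $PWP$ is exactly the embedding $\begin{pmatrix} B & 0 \\ 0 & 0\end{pmatrix}$, so \eqref{align:pwp-u} reduces to bounding $\|B - \grave W\|_p$ where $\grave W$ is the orthogonal polar factor of $B$. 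The $W$-measurability of $\grave W$ is automatic since the polar decomposition is a measurable (indeed, on the generic set, continuous) function of $B$; one must handle the measure-zero set where $B$ is singular, e.g.\ by setting $\grave W = I$ there.

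The key step is the norm estimate. Since $W$ is orthogonal, the columns of $B$ together with $d$ form the first $N-1$ columns of an orthogonal matrix restricted to the first $N-1$ coordinates; equivalently $B^\top B + d d^\top = I_{N-1}$, so $B^\top B = I_{N-1} - dd^\top$. Hence the singular values $s_1, \dots, s_{N-1}$ of $B$ satisfy $s_j^2 \in \{1 - \|d\|^2, 1, \dots, 1\}$: all but at most one equal $1$, and one equals $\sqrt{1-\|d\|^2}$. If $B = \grave W \,|B|$ is the polar decomposition with $|B| = (B^\top B)^{1/2}$, then $B - \grave W = \grave W(|B| - I)$, so $\|B - \grave W\|_p = \||B| - I\|_p$, and $|B| - I$ has eigenvalues $s_j - 1$, which are all $0$ except one equal to $\sqrt{1-\|d\|^2} - 1 \in (-1, 0]$. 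Therefore
\begin{align}
\||B|-I\|_p = \left(\tr\, \big||B|-I\big|^p\right)^{1/p} = \left(\frac{1}{N-1}\,\big|1 - \sqrt{1-\|d\|^2}\big|^p\right)^{1/p} \leq \frac{1}{(N-1)^{1/p}},
\end{align}
where I used $0 \le 1 - \sqrt{1-\|d\|^2} \le 1$ and that the $\tr$ in the definition of $\|\cdot\|_p$ is the normalized trace on $M_{N-1}$. This gives exactly \eqref{align:pwp-u}.

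I expect the main obstacle to be purely bookkeeping rather than conceptual: getting the polar-decomposition/measurability argument stated cleanly (choosing the orthogonal factor canonically when $B$ is singular, and noting that $B$ is nonsingular almost surely whenever $W$ is Haar but that the lemma is stated for arbitrary $\O{N}$-valued $W$ so one cannot rely on that), and being careful that the normalized trace in $\|\cdot\|_p$ is over dimension $N-1$, not $N$, when the matrices involved are the $(N-1)\times(N-1)$ blocks — though since $\|\cdot\|_p$ in \eqref{align:pwp-u} is the $M_N(\R)$ norm and the embedded matrix has a zero in the last slot, one gets a factor $N-1$ over $N$ which only improves the bound. A cleaner route avoiding polar decomposition: complete $B$'s columns to an orthogonal $(N-1)\times(N-1)$ matrix directly by replacing the single deficient singular direction, which also makes the measurable selection transparent via the spectral theorem applied to $B^\top B = I - dd^\top$.
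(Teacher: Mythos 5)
Your construction is essentially the paper's: rounding the upper-left block $B$ of $W$ (which is exactly $PWP$) to its orthogonal polar factor is the same as the paper's choice $\grave W = U_1U_2$ from the singular value decomposition, and your identity $B^\top B = I_{N-1} - dd^\top$ is the same rank-one-perturbation observation the paper makes via $P-(PWP)^\top PWP = PW^\top(1-P)WP$, leading to the identical bound $\bigl(1-\sqrt{1-\|d\|^2}\bigr)^p \le 1$. The proof is correct, including your care about the normalization of the trace and the measurable choice of the polar factor on the singular set.
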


\begin{proof}
Consider the singular value decomposition $(U_1, D, U_2)$ of $PWP$ in the $N-1$ dimensional subspace $P \R^N$, where $U_1, U_2$ belong to $\O{N-1}$, $D=\diag(\lambda_1, \dots, \lambda_{N-1})$, and $\lambda_1 \geq \dots \geq \lambda_{N-1}$ are singular values of $PWP$ except for the trivial singular value zero. Now
\begin{align}
 PWP =  \begin{pmatrix} U_1 D U_2 & 0 \\  0 & 0  \end{pmatrix}.
\end{align}
Set 
\begin{align}
\grave{W} = U_1 U_2.
\end{align}
Now $\grave{W}$ is $W$-measurable since $U_1$ and $U_2$ are determined by the singular value decomposition.
We claim that $\grave{W}$ is the desired random matrix.

We only need to show that $\Tr[ (1-D)^p ]\leq 1$, where $\Tr$ is the unnormalized trace.
Write 
\begin{align}
R = P - (PWP)^\top PWP =  PW^\top (1-P)WP.
\end{align}
Then $\rank R \leq 1$ and $\Tr R \leq ||WPW^\top || \Tr(1-P) \leq 1$.
Therefore, $R$'s nontrivial singular value belongs to $[0,1]$. We write it $\lambda$.
Then $(PWP)^\top PW =  P - R$ has nontrivial eigenvalue $1 - \lambda$ and eigenvalue $1$ of multiplicity $N-2$. Therefore,
\begin{align}
    D^2 = \diag(1, \dots, 1, 1- \lambda).
\end{align}
Thus $\Tr[(1-D)^p] = (1 - \sqrt{1-\lambda})^p \leq 1$. We have completed the proof.
\end{proof}

\section{Asymptotic Freeness of Layerwise Jacobians}\label{sec:main}
This section contains some of our main results. 
The first one is the most general form, but it relies on the existence of the limit joint moments of $(D_\ell)_\ell$.
The second one is required for the analysis of the dynamical isometry. 
The last one is needed for the analysis of the Fisher information matrix. 
The second and the third ones do not assume the existence of the limit joint moments of $(D_\ell)_\ell$.

We use the notations in \cref{ssec:notations}.
In the sequel, for each $\ell, N \in \N$, each $Y_\ell$ is the $\post^\ell$-measurable and $\O{N}$ valued random matrix  described in \eqref{align:Y_defn}. 
It is $\post^\ell$-measurable and satisfies
$Y_\ell \post^\ell = || \post^\ell ||_2 e_N$,
where  $e_N$ is the $N$-th vector of the standard basis of $\R^N$. 
Recall that $V_0, \dots, V_{L-1}$ are independent $\O{N-1}$ valued Haar random matrices such that all entries of them are independent of that of 
$(\post^0, W_1, \dots, W_L)$.
In addition, 
\begin{align}
    U_\ell = Y_\ell^\top \begin{pmatrix}
    V_\ell & 0 \\
    0 & 1
    \end{pmatrix} Y_\ell
\end{align}
and
$U_\ell \post^\ell = \post^\ell$.
Further, for any $\ell=0, \dots, \ell-1$, 
all entries of $(U_0, \dots, U_{\ell-1})$ are independent from that of  $(W_\ell, \dots, W_L)$.
Thus by \cref{lemma:Ux}, 
\begin{align}
    (W_1 U_0, \dots, W_L U_{L-1}, D_1, \dots, D_L) \entries (W_1, \dots, W_L, D_1, \dots, D_L).
\end{align}
In addition, for any $n \in \N$ and almost surely we have
\begin{align}\label{align:bdd-D}
        \max_{\ell=1, \dots, L} \sup_{n \in \N}||D_\ell||_n < \infty,
\end{align}
since each $D_\ell$ has the limit spectral distribution by \cref{cor:limit-D}.

We are now prepared to prove our main theorem.
\begin{thm}\label{thm:main}
Assume that $(D_1, \dots, D_L)$ has the limit joint distribution almost surely. Then the families $(W_1,W_1^\top), \dots ( W_L, W_L^\top)$, and $(D_1, \dots, D_L)$ are asymptotically free as $N \to \infty$ almost surely.
\end{thm}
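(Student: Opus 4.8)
The obstacle is the correlation between each weight $W_\ell$ and its Jacobian $D_\ell$: by \eqref{align:post-to-pre}, $D_\ell$ is an entrywise function of $\pre^\ell=W_\ell\post^{\ell-1}$, so $(W_\ell,W_\ell^\top)$ and $D_\ell$ are not independent. The plan is to destroy this correlation in the large-$N$ limit using the MLP invariance \cref{lemma:Ux}, and then reduce everything to \cref{prop:af} by peeling off the fresh Haar randomness. Concretely I will pass along the chain $(W_1,\dots,W_L)\rightsquigarrow(\tilde W_1,\dots,\tilde W_L)\rightsquigarrow(\check W_1,\dots,\check W_L)\rightsquigarrow(V'_0,\dots,V'_{L-1})$, where $\tilde W_\ell:=W_\ell U_{\ell-1}$, $\check W_\ell$ is a rank‑one perturbation of $\tilde W_\ell$, and $V'_0,\dots,V'_{L-1}$ are independent $\O{N}$ Haar matrices independent of $(D_1,\dots,D_L)$; the family $(D_1,\dots,D_L)$ is unchanged at each step. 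For the first step, \cref{lemma:Ux} applied to the $U_\ell$ of \eqref{align:practical_U} (which fixes $\post^\ell$ by \eqref{align:practical_U_fixpoint} and satisfies the independence requirement as noted in \cref{ssec:notations}), together with the observation that the MLP run with weights $\tilde W_\ell=W_\ell U_{\ell-1}$ produces exactly the same $\pre^\ell$, hence the same $D_\ell$, as the original MLP, gives $(\tilde W_1,\dots,\tilde W_L,D_1,\dots,D_L)\entries(W_1,\dots,W_L,D_1,\dots,D_L)$. Moreover, for each fixed realization of $(V_\ell)_\ell$, this $\entries$-identity is implemented by the invertible measure‑preserving transformation $(\post^0,W_1,\dots,W_L)\mapsto(\post^0,W_1U_0,\dots,W_LU_{L-1})$ (measure preserving stagewise by Haar invariance, invertible because $U_{\ell-1}$ is $\post^{\ell-1}$-measurable), under which the $D_\ell$ are invariant; so almost sure asymptotic freeness transports between the two models and it suffices to treat $(\tilde W_1,\dots,\tilde W_L,D_1,\dots,D_L)$.

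Next, write $\tilde W_\ell=G_{\ell-1}\hat V_{\ell-1}H_{\ell-1}$ with orthogonal $G_{\ell-1}:=W_\ell Y_{\ell-1}^\top$, $H_{\ell-1}:=Y_{\ell-1}$ (both $(\post^0,W_1,\dots,W_\ell)$-measurable, hence independent of all the $V$'s) and $\hat V_{\ell-1}:=\diag(V_{\ell-1},1)$. The only defect is that $\hat V_{\ell-1}$ is merely $\O{N-1}$ Haar rather than $\O{N}$ Haar — this is forced, since $U_{\ell-1}$ must fix $\post^{\ell-1}$. To remove it, I construct the $V_{\ell-1}$ of \cref{ssec:notations} as the orthogonal polar factor of $PV'_{\ell-1}P$ for a fresh $\O{N}$ Haar matrix $V'_{\ell-1}$; this factor is indeed $\O{N-1}$ Haar and independent of $(\post^0,W_1,\dots,W_L)$, so it is a legitimate choice. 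Then \cref{lemma:PWP} and the vanishing of the rank‑bounded boundary corrections give $\|\hat V_{\ell-1}-V'_{\ell-1}\|_p\to 0$ for every $p\in\N$, whence $\|\tilde W_\ell-\check W_\ell\|_p\to 0$ with $\check W_\ell:=G_{\ell-1}V'_{\ell-1}H_{\ell-1}$. Since all matrices involved are uniformly bounded in operator norm ($\|W_\ell\|_{\mathrm{op}}=\sigma_{w,\ell}$, and $\|D_\ell\|_{\mathrm{op}}\le\sup|(\act^\ell)'|<\infty$ by \cref{enum:act_deriv_bounded}), a telescoping Hölder estimate exactly in the spirit of \cref{lemma:P} shows that $\tr\!\big(P(\tilde W_1,\dots,\tilde W_L,D_1,\dots,D_L)\big)-\tr\!\big(P(\check W_1,\dots,\check W_L,D_1,\dots,D_L)\big)\to 0$ almost surely for every non‑commutative polynomial $P$; so it suffices to treat $(\check W_1,\dots,\check W_L,D_1,\dots,D_L)$.

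Finally, $V'_0,\dots,V'_{L-1}$ are independent $\O{N}$ Haar and independent of the joint family $\big((G_{\ell-1},H_{\ell-1})_\ell,(D_1,\dots,D_L)\big)$, so \cref{lemma:UV} gives $(\check W_1,\dots,\check W_L,D_1,\dots,D_L)\entries(\sigma_{w,1}V'_0,\dots,\sigma_{w,L}V'_{L-1},D_1,\dots,D_L)$ (the scalars coming from $W_\ell=\sigma_{w,\ell}O_\ell$), once more via the invertible measure‑preserving change of variables $V'_{\ell-1}\mapsto O_\ell Y_{\ell-1}^\top V'_{\ell-1}Y_{\ell-1}$. On this last model \cref{prop:af} applies verbatim: the $V'$'s are independent $\O{N}$ Haar and independent of the symmetric matrices $D_1,\dots,D_L$, each $D_\ell$ has a compactly supported limit spectral distribution by \cref{cor:limit-D}, and $(D_1,\dots,D_L)$ has an almost‑sure limit joint distribution by hypothesis; hence $(V'_0,(V'_0)^\top),\dots,(V'_{L-1},(V'_{L-1})^\top),(D_1,\dots,D_L)$ are asymptotically free almost surely. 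Transporting this back through the two measure‑preserving changes of variables and the Schatten approximation of the previous paragraph yields \cref{thm:main}.

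The hard part I anticipate is the bookkeeping forced by the rank‑one defect — that $U_\ell$ can only re‑randomize an $(N-1)$-dimensional complement of $\post^\ell$, so the ``fresh'' orthogonal matrices one peels off live a priori in $\O{N-1}$ — together with verifying that the two invocations of $\entries$ transport \emph{almost sure}, not merely distributional, asymptotic freeness. Both of these are exactly what the matrix‑size‑cutoff lemmas of \cref{sec:keys} are designed to handle; the genuinely new ingredient is the MLP invariance \cref{lemma:Ux}, and everything else is a combination of it with \cref{lemma:UV}, \cref{lemma:PWP}, \cref{lemma:P} and \cref{prop:af}.
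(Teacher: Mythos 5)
Your proposal is correct and shares the paper's backbone: the invariance \cref{lemma:Ux}, applied with the $U_\ell$ of \eqref{align:practical_U}, to replace each $W_\ell$ by $W_\ell U_{\ell-1}=W_\ell Y_{\ell-1}^\top\mathrm{diag}(V_{\ell-1},1)Y_{\ell-1}$ without touching the $D_\ell$, followed by peeling off the fresh Haar factor with \cref{lemma:UV} and concluding with \cref{prop:af}. Where you genuinely diverge is in handling the rank-one defect. The paper goes \emph{down} in dimension: it inserts the projection $P$ everywhere (\cref{lemma:P}), replaces $PW_\ell P$ and $PY_{\ell-1}P$ by honest $\O{N-1}$ matrices via \cref{lemma:PWP}, and applies \cref{lemma:UV} and \cref{prop:af} to the corner family $\bigl(\grave{W}_\ell\grave{Y}_{\ell-1}^\top V_{\ell-1}\grave{Y}_{\ell-1},\grave{D}_\ell\bigr)$ at size $N-1$. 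You go \emph{up}: you couple each $\O{N-1}$ Haar matrix $V_{\ell-1}$ to a fresh $\O{N}$ Haar matrix $V'_{\ell-1}$ by taking $V_{\ell-1}$ to be the orthogonal polar factor of the corner block of $V'_{\ell-1}$ (Haar on $\O{N-1}$ by bi-invariance and independent of the data --- a small extra verification the paper does not need), show $\|\mathrm{diag}(V_{\ell-1},1)-V'_{\ell-1}\|_p\to0$ via \cref{lemma:PWP} plus rank-bounded boundary terms, and transfer mixed moments by a telescoping H\"older estimate playing the role of \cref{lemma:P}; this keeps $W_\ell$, $Y_\ell$, $D_\ell$ uncut and lets you apply \cref{prop:af} directly to $(V'_\ell)_\ell$ and $(D_1,\dots,D_L)$, which is arguably tidier, at the price of the coupling construction and the Schatten-norm bookkeeping you note. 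One caveat: your remark that the two $\entries$ identities transport almost sure asymptotic freeness because they are implemented by invertible measure-preserving maps is not by itself sufficient --- equality of the fixed-$N$ laws (even realized by measure-preserving maps depending on $N$) does not in general transport almost sure convergence; the clean repair is that the almost sure conclusion of \cref{prop:af} rests on summable variance bounds for $\tr$ of polynomials, which depend only on the fixed-$N$ distributions and hence pass through $\entries$. Since the paper performs exactly the same transport without comment, this is a shared, easily repaired imprecision rather than a defect of your route.
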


\begin{proof}
Without loss of generality, we may assume that $\sigma_{w,1}, \dots, \sigma_{w,L}=1$.
Set 
\begin{align}
    Q_\ell = P W_\ell P Y_{\ell-1}^\top P \begin{pmatrix}V_{\ell-1} &  0\\ 0 & 1\end{pmatrix}P Y_{\ell-1} P.
\end{align}
for each $\ell=1, \dots,L$, where $P=P(N)$ is defined in \eqref{align:P}.
By \cref{lemma:P} and \eqref{align:bdd-D}, we only need to show the asymptotic freeness of the families
\begin{align}
(Q_\ell, Q_\ell^\top)_{\ell=1}^L, (PD_1P,\dots, PD_LP),
\end{align}
Now 
\begin{align}
    P \begin{pmatrix}V_{\ell-1} &  0\\ 0 & 1\end{pmatrix} P =  \begin{pmatrix}V_{\ell-1} &  0\\ 0 & 0\end{pmatrix}.
\end{align}
In addition,  let $\grave{D}_\ell$ be the $N-1 \times N-1$ matrix determined by
\begin{align}\label{align:breve-D}
    PD_\ell  P = \embedding{ \grave{D}_\ell}.
\end{align}
By \cref{lemma:PWP}, there are $\O{N-1}$ valued random matrices $\grave{W}_\ell$ and $\grave{Y}_{\ell-1}$ satisfying
\begin{align}\label{align:breve-WY}
    || P W_\ell P -  \embedding{ \grave{W}_\ell} ||_n &\leq \frac{1}{(N-1)^{1/n}},\\ 
    || P Y_\ell P -  \embedding{ \grave{Y}_\ell} ||_n &\leq \frac{1}{(N-1)^{1/n}},
\end{align}
for any $n \in \N$.
Therefore, we only need to show asymptotic freeness of the following $L+1$ families:
\begin{align}\label{align:free-breve}
    \left( \grave{W}_\ell \grave{Y}_{\ell-1}^\top  V_{\ell-1} \grave{Y}_{\ell-1}, \left(  \grave{W}_\ell \grave{Y}_{\ell-1}^\top  V_{\ell-1} \grave{Y}_{\ell-1}\right)^\top \right)_{\ell=1}^L, \left(\grave{D_1},\dots, \grave{D}_L\right).
\end{align}
Now all entries of Haar random matrices $(V_\ell)_\ell$ are independent of those of $(\grave{W}_\ell, \grave{Y}_{\ell-1}, \grave{D}_\ell )_\ell$. Thus by \cref{lemma:UV} and \cref{prop:af},  the asymptotic freeness of  \eqref{align:free-breve} holds as $N \to \infty$ almost surely. We have completed the proof.
\end{proof}

The following result is useful in the study of dynamical isometry and spectral analysis of Jacobian of DNNs.
It follows directly from \cref{thm:main} if we assume the existence of the limit joint moments of $(D_\ell)_{\ell=1}^L$. Note that the following result does \textbf{not} assume the existence of the limit joint moments.

\begin{prop}\label{prop:input-jacobian}
For each $\ell=1,\dots, L-1$, let $J_\ell$ be the Jacobian of $\ell$-th layer, that is, 
\begin{align}
    J_\ell = D_\ell W_\ell \dots D_1 W_1.
\end{align}
Then $J_\ell J_\ell^\top$ has the limit spectral distribution and the pair
\begin{align}\label{align:WJJW}
    W_{\ell+1}J_\ell J_\ell^\top W_{\ell+1}^\top , D_{\ell+1}^2
\end{align}
is asymptotically free as $N \to \infty$ almost surely.
\end{prop}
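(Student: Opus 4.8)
The plan is to reduce \cref{prop:input-jacobian} to \cref{prop:AVBV} by exploiting the invariance supplied by \cref{lemma:Ux} together with the cutoff estimates of \cref{lemma:P} and \cref{lemma:PWP}. First I would observe that the statement that $J_\ell J_\ell^\top$ has a limit spectral distribution is itself part of what must be proved, and it will come out of the same induction-free argument: using \cref{lemma:Ux} we may replace each $W_k$ in $J_\ell = D_\ell W_\ell \cdots D_1 W_1$ by $W_k U_{k-1}$ without changing the joint distribution of $(W_1,\dots,W_L,D_1,\dots,D_L)$, hence without changing the spectral distribution of any polynomial such as $J_\ell J_\ell^\top$ or of the pair in \eqref{align:WJJW}.

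Next I would carry out the cutoff. After conjugating into the basis that sends $\post^{k}$ to a multiple of $e_N$, the factor $W_{k+1}U_k$ becomes (up to the $Y$-conjugations) $W_{k+1}$ times a block-diagonal matrix $\mathrm{diag}(V_k,1)$; replacing $P\,\mathrm{diag}(V_k,1)\,P$ by $\mathrm{diag}(V_k,0)$ and using \cref{lemma:PWP} to approximate $PW_{k+1}P$, $PY_kP$ by genuine $\O{N-1}$ matrices $\grave W_{k+1}$, $\grave Y_k$, and \cref{lemma:P} to control the error of inserting the projections $P$ everywhere (all Schatten norms are uniformly bounded because each $D_\ell$ has compact-support limit spectral distribution, \cref{cor:limit-D}, and each $W_\ell,V_\ell,Y_\ell$ is orthogonal), I reduce to the $(N-1)\times(N-1)$ world. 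There, $\grave J_\ell := \grave D_\ell \grave W_\ell \grave Y_{\ell-1}^\top V_{\ell-1}\grave Y_{\ell-1}\cdots \grave D_1 \grave W_1 \grave Y_0^\top V_0 \grave Y_0$ is built from the independent Haar matrices $V_0,\dots,V_{\ell-1}$ and from matrices measurable with respect to $(\post^0,W_1,\dots,W_L)$, all independent of the $V$'s. Applying \cref{lemma:UV} and \cref{prop:af} to this tuple shows $(\grave J_\ell \grave J_\ell^\top, \grave D_{\ell+1}^2)$ converges in distribution; in particular $J_\ell J_\ell^\top$ has a limit spectral distribution $\mu$, say, and $D_{\ell+1}^2$ has the limit spectral distribution $\nu$ coming from $((\act^{\ell+1})')^2_*(\Normal(0,q_{\ell+1}))$, both compactly supported and almost surely so.

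Finally I would feed these into \cref{prop:AVBV}. The point is that $W_{\ell+1}J_\ell J_\ell^\top W_{\ell+1}^\top = (\sigma_{w,\ell+1})^2\, O_{\ell+1}\,(J_\ell J_\ell^\top)\,O_{\ell+1}^\top$, where $O_{\ell+1}$ is Haar orthogonal; by \cref{lemma:Ux} applied once more we may take $O_{\ell+1}$ to be independent of $(J_\ell,D_{\ell+1})$ — indeed, after the substitution $W_{\ell+1}\mapsto W_{\ell+1}U_\ell$ one checks that the new last weight decomposes so that its Haar randomness is not used in forming $J_\ell$ or $D_{\ell+1}$ (these depend only on $W_1,\dots,W_\ell$ and on $\post^0$). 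Hence with $A(N) = D_{\ell+1}^2$, $B(N) = (\sigma_{w,\ell+1})^2 J_\ell J_\ell^\top$ and $V(N) = O_{\ell+1}$, \cref{prop:AVBV} gives exactly the asymptotic freeness of \eqref{align:WJJW}.

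The main obstacle is the bookkeeping that makes the last Haar matrix $O_{\ell+1}$ genuinely independent of the pair $(J_\ell J_\ell^\top, D_{\ell+1}^2)$ while simultaneously keeping track of the cutoff errors: one has to apply \cref{lemma:Ux} to all $L$ layers at once, then argue that the polynomial in \eqref{align:WJJW} only involves $W_1,\dots,W_{\ell+1}$ and $D_1,\dots,D_{\ell+1}$, and that after the $U$-substitution the dependence on the $V$'s and on $W_{\ell+1}$'s Haar factor separates in the right way. The cutoff step is routine given \cref{lemma:P,lemma:PWP}, but it must be done before invoking \cref{prop:AVBV} since the latter needs a genuine orthogonal $V(N)$ and genuine symmetric matrices with compact-support limits, not merely approximate ones.
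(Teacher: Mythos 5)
Your overall template (invariance via \cref{lemma:Ux}, cutoff via \cref{lemma:P} and \cref{lemma:PWP}, transfer of Haar randomness via \cref{lemma:UV}) matches the paper, but two steps as you state them do not go through. The decisive one is the last: you invoke \cref{prop:AVBV} with $V(N)=O_{\ell+1}$, justifying the independence hypothesis by the claim that $J_\ell$ and $D_{\ell+1}$ ``depend only on $W_1,\dots,W_\ell$ and $\post^0$''. That is false for $D_{\ell+1}$: it is $\diag((\act^{\ell+1})'(\pre^{\ell+1}))$ with $\pre^{\ell+1}=W_{\ell+1}\post^{\ell}$, so it is correlated with $W_{\ell+1}$ — exactly the dependence the whole construction is designed to circumvent. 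Nor does \cref{lemma:Ux} ``make'' $O_{\ell+1}$ independent of $D_{\ell+1}$: it only asserts that replacing each $W_k$ by $W_kU_{k-1}$ leaves the joint law of $(W_1,\dots,W_L,\pre^1,\dots,\pre^L)$ unchanged; after the substitution, $D_{\ell+1}$ is still a function of $W_{\ell+1}$ and $\post^\ell$. The independent Haar randomness you actually gain is the $(N-1)\times(N-1)$ block $V_\ell$ inside $U_\ell=Y_\ell^\top\begin{pmatrix} V_\ell & 0\\ 0 & 1\end{pmatrix}Y_\ell$, and it can play the role of the conjugator in \cref{prop:AVBV} only after the cutoff and after \cref{lemma:UV} is used to replace $\grave{W}_{\ell+1}\grave{Y}_\ell^\top V_\ell$ by $V_\ell$ in distribution; this is what the paper does, taking $V(N)=V_\ell$, never $O_{\ell+1}$.

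The ``induction-free'' route to the existence of the limit spectral distribution of $J_\ell J_\ell^\top$ is also unjustified: applying \cref{prop:af} to the tuple built from $V_0,\dots,V_{\ell-1}$ and $\grave{D}_1,\dots,\grave{D}_{\ell+1}$ requires the almost-sure limit \emph{joint} distribution of the $\grave{D}_k$, which is precisely the extra hypothesis of \cref{thm:main} that \cref{prop:input-jacobian} is advertised not to need (only the marginal limits of \cref{cor:limit-D} are available). The paper avoids this by induction on $\ell$: asymptotic freeness at step $\ell$, obtained from \cref{prop:AVBV} (which requires no joint limit and no independence of $A(N)$ and $B(N)$), yields the limit spectral distribution of $J_{\ell+1}J_{\ell+1}^\top=D_{\ell+1}W_{\ell+1}J_\ell J_\ell^\top W_{\ell+1}^\top D_{\ell+1}$, which is then the input to step $\ell+1$. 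Your cutoff bookkeeping is fine, but without the induction, and with the conjugating Haar matrix misidentified, the proposed proof does not close.
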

\begin{proof}
Without loss of generality, we may assume $\sigma_{w,1}, \dots, \sigma_{w,L}=1$.
We proceed by induction over $\ell$.

Let $\ell=1$. 
Then $J_1J_1^\top = D_1^2$ has the limit spectral distribution by  \cref{prop:signal}.
By \cref{lemma:P} and \eqref{align:bdd-D}, we only need to show that the asymptotically freeness of the pair
\begin{align}
    PW_2 P Y_{1}^\top P  \begin{pmatrix}V_1 & 0 \\
0 & 1 \end{pmatrix} P D_1^2 P \begin{pmatrix}V_1^\top & 0 \\
0 & 1 \end{pmatrix} P Y_1 P W_2^\top P, PD_2^2P,
\end{align}
By \cref{lemma:PWP}, there are $\grave{W}_2 \in \O{N-1}$ and $\grave{Y}_{1} \in \O{N-1}$ which  approximate $P W_2 P$ and $PY_1 P$ in the sence of \eqref{align:breve-WY}.
Let $\grave{D}_2$ be the $N-1 \times N-1$ random matrix given by \eqref{align:breve-D}.
Then, we only need to show the asymptotical freeness of the following pair:
\begin{align}
\grave{W}_2  \grave{Y}_{1}^\top V_1 \grave{D}_1^2 V_1^\top Y_1 \grave{W}_2^\top, \grave{D}_2^2.
\end{align}
By the independence and \cref{lemma:UV}, the asymptotic freeness holds almost surely.

Next, fix $\ell \in [1, L-1]$ and assume that the limit spectral distribution of $J_\ell J_\ell^\top$ exists and the asymptotic freeness holds for the $\ell$. 
Now
\begin{align}
    J_{\ell+1}J_{\ell+1}^\top = D_{\ell+1}W_{\ell+1}(J_\ell J_\ell^\top) W_{\ell+1}^\top D_{\ell+1}.
\end{align}
By the asymptotic freeness for the case $\ell$,      $J_{\ell+1}J_{\ell+1}^\top$ has the limit spctral distribution.
There exists $\grave{J}_\ell \in M_{N-1}(\R)$ so that 
\begin{align}
    P J_\ell P = \begin{pmatrix} \grave{J}_\ell & 0 \\ 0 & 0 \end{pmatrix}.
\end{align}
Then for the case of $\ell+1$, by the same argument as above, we only need to show the asymptotic freeness of 
\begin{align}
\grave{W}_{\ell+2}  \grave{Y}_{\ell+1}^\top  V_{\ell+1} \grave{J}_{\ell+1} \grave{J}^\top_{\ell+1} V_{\ell+1}^\top Y_{\ell+1} \grave{W}_{\ell+2}^\top, \grave{D}_{\ell+2}^2.
\end{align}
Now, all enties of $V_{\ell+1}$ are independent from those of  $(\grave{J}_{\ell+1}, \grave{W}_{\ell+1},  \grave{Y}_{\ell+1}, \grave{D}_{\ell+2})$.
By the independence and \cref{lemma:UV}, we only need to show the asymptotic freeness of 
\begin{align}
 V_{\ell+1} \grave{J}_{\ell+1} \grave{J}_{\ell+1}^\top V_{\ell+1}^\top,  \grave{D}_{\ell+2}^2.
\end{align}
The asymptotic freeness of the pair follows from  \cref{prop:AVBV}.
The assertion follows by  induction.
\end{proof}

Next, we treat a conditinal Fisher information matrix $H_L$ of the MLP. (See \cref{ssec:fisher}.)

\begin{prop}\label{prop:parameter-jacobian}
Define $H_\ell$ inductively by $H_1 = I_N$ and  
\begin{align}\label{align:hmltn_recurrence}
    \Hmltn_{\ell +1 } =   \hat{q}_{\ell}I + W_{\ell+1} D_{\ell}\Hmltn_{\ell}D_{\ell}W_{\ell+1}^\top, 
\end{align}
where $ \hat{q}_{\ell} = \sum_{j=1}^N (x^{\ell}_j)^2 / N $ and $\ell=1,\dots, L-1$.
Then  for each $ \ell = 1,2, \dots, L$,  $H_\ell$ has a limit spectral distribution and the pair
\begin{align}
    H_\ell, D_\ell
\end{align}
is asymptotically free as $N \to \infty$, almost surely.
\end{prop}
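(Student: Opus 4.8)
The plan is to mimic the inductive structure of the proof of \cref{prop:input-jacobian}, replacing the role of $J_\ell J_\ell^\top$ by $H_\ell$ and using the recurrence \eqref{align:hmltn_recurrence} in place of the Jacobian recurrence. First I would argue by induction on $\ell$. The base case $\ell=1$ is trivial: $H_1 = I_N$ has limit spectral distribution $\delta_1$, and $(I_N, D_1)$ is asymptotically free (any family is free from the scalars). For the inductive step, suppose $H_\ell$ has a limit spectral distribution and $(H_\ell, D_\ell)$ is asymptotically free. Note first that $\hat q_\ell = \|\post^\ell\|_2^2/N \to r_\ell^2$ almost surely by \cref{prop:signal}, so $\hat q_\ell$ behaves asymptotically like a scalar; hence from \eqref{align:hmltn_recurrence}, $H_{\ell+1} = \hat q_\ell I + W_{\ell+1} D_\ell H_\ell D_\ell W_{\ell+1}^\top$ will have a limit spectral distribution provided $W_{\ell+1} D_\ell H_\ell D_\ell W_{\ell+1}^\top$ does. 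Since $D_\ell H_\ell D_\ell$ is a symmetric matrix that, by the inductive asymptotic freeness of $(H_\ell, D_\ell)$, has a limit spectral distribution, and since $W_{\ell+1}$ is Haar orthogonal independent of $(D_\ell, H_\ell)$, the existence of the limit follows (conjugation by a Haar orthogonal preserves the spectrum).

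The main work is the asymptotic freeness of the pair $(H_{\ell+1}, D_{\ell+1})$. Here I would reproduce the cutoff machinery of \cref{thm:main} and \cref{prop:input-jacobian}: using \cref{lemma:P} together with the uniform $L^n$-bounds \eqref{align:bdd-D} on the $D_\ell$ (and the boundedness of $H_\ell$ inherited inductively, since its spectral distribution is compactly supported), it suffices to prove asymptotic freeness of the cutoff versions. Then, writing $U_{\ell+1} = Y_{\ell+1}^\top \begin{pmatrix} V_{\ell+1} & 0 \\ 0 & 1\end{pmatrix} Y_{\ell+1}$ as in \cref{ssec:notations}, I would invoke \cref{lemma:Ux} to replace $W_{\ell+1}$ by $W_{\ell+1} U_{\ell}$ — wait, more carefully: I need the weight $W_{\ell+2}$ sitting to the left of the relevant block, so that by the same reasoning as in \cref{prop:input-jacobian} the problem reduces to asymptotic freeness of
\begin{align}
\grave{W}_{\ell+2} \grave{Y}_{\ell+1}^\top V_{\ell+1} \grave{M}_{\ell+1} V_{\ell+1}^\top \grave{Y}_{\ell+1} \grave{W}_{\ell+2}^\top, \ \grave{D}_{\ell+2}^2,
\end{align}
where $\grave{M}_{\ell+1}$ is the $(N-1)\times(N-1)$ compression of $D_{\ell+1} H_{\ell+1} D_{\ell+1}$ (or the appropriate symmetric matrix built from $H_{\ell+1}$ and $D_{\ell+1}$). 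Actually, since the statement concerns the pair $(H_\ell, D_\ell)$ at the \emph{same} index $\ell$, a cleaner route is to prove directly that $(H_{\ell+1}, D_{\ell+1})$ is asymptotically free by noting that $H_{\ell+1}$ is (asymptotically) a polynomial in $\hat q_\ell$, $W_{\ell+1}$, $D_\ell$, $H_\ell$, and that $W_{\ell+1}$ can, via \cref{lemma:Ux} and the cutoff lemmas, be replaced by a Haar orthogonal matrix independent of everything relevant including $D_{\ell+1}$; then \cref{prop:af} applied to $(W_{\ell+1}, W_{\ell+1}^\top)$ versus the family containing $D_\ell$, $H_\ell$, $D_{\ell+1}$ gives the freeness of $(H_{\ell+1}, D_{\ell+1})$ after collapsing.

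The delicate point — and the step I expect to be the main obstacle — is the dependence between $D_{\ell+1}$ and the other ingredients. Recall $D_{\ell+1} = \diag((\act^{\ell+1})'(\pre^{\ell+1}))$ and $\pre^{\ell+1} = W_{\ell+1}\post^\ell$, so $D_{\ell+1}$ is a function of $W_{\ell+1}$ and $\post^\ell$; it is precisely the correlation between $W_{\ell+1}$ and $D_{\ell+1}$ that the invariance \cref{lemma:Ux} is designed to handle. The key fact is that $W_{\ell+1} U_\ell$ has the same joint law as $W_{\ell+1}$ while $U_\ell \post^\ell = \post^\ell$ leaves $\pre^{\ell+1}$ — hence $D_{\ell+1}$ — unchanged, and the $V_\ell$-part of $U_\ell$ provides a Haar-orthogonal rotation of the $(N-1)$-dimensional complement that is independent of $D_{\ell+1}$, $D_\ell$, and $H_\ell$. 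Pushing this through the recursion for $H_{\ell+1}$ requires checking that $\grave{Y}_\ell$, $\grave W_{\ell+1}$, $\grave D_\ell$, $\grave H_\ell$ are all measurable with respect to a $\sigma$-algebra independent of $V_\ell$, exactly as in the proof of \cref{thm:main}; once that bookkeeping is in place, \cref{lemma:UV} followed by \cref{prop:af} (or \cref{prop:AVBV}) closes the induction. The verification that $H_\ell$ and its compression retain uniform operator-norm (hence $L^n$) bounds along the induction, so that \cref{lemma:P} applies, is routine but must be stated, using compact support of the limit spectral distribution of $H_\ell$ together with $\hat q_\ell \to r_\ell^2$.
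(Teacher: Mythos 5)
Your overall strategy is the paper's: induction on $\ell$, convergence of $\hat q_\ell$, the cutoff lemmas (\cref{lemma:P}, \cref{lemma:PWP}) to reduce to $(N-1)\times(N-1)$ blocks, the invariance \cref{lemma:Ux} to insert $U_\ell$, and then \cref{lemma:UV} to turn the conjugating factor $\grave{W}_{\ell+1}\grave{Y}_\ell^\top V_\ell \grave{Y}_\ell$ into a single Haar matrix independent of $(\grave{D}_\ell,\grave{H}_\ell,\grave{D}_{\ell+1})$. Two points, one cosmetic and one substantive. Cosmetic: your first reduction, with $W_{\ell+2}$, $V_{\ell+1}$ and $\grave{D}_{\ell+2}^2$, is the reduction for \cref{prop:input-jacobian}, not for this statement; since the pair here is at the same index, the relevant reduction is to the pair $\grave{W}_{\ell+1}\grave{Y}_\ell^\top V_\ell\grave{Y}_\ell\,\grave{D}_\ell\grave{H}_\ell\grave{D}_\ell\,(\grave{W}_{\ell+1}\grave{Y}_\ell^\top V_\ell\grave{Y}_\ell)^\top$ versus $\grave{D}_{\ell+1}$ — you noticed this and self-corrected, so no harm done.

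The substantive issue is your closing step. You propose to finish by applying \cref{prop:af} to $(W_{\ell+1},W_{\ell+1}^\top)$ (after its replacement by an independent Haar matrix) against ``the family containing $D_\ell$, $H_\ell$, $D_{\ell+1}$''. But \cref{prop:af} requires that family to have an almost-sure \emph{limit joint distribution}, and this is exactly what is not available: the induction hypothesis gives the joint limit of the pair $(H_\ell,D_\ell)$ only, and the proposition is stated (like \cref{prop:input-jacobian}) without assuming any limit joint moments involving $D_{\ell+1}$ together with $D_\ell$. The paper avoids this by closing instead with \cref{prop:AVBV}: after \cref{lemma:UV} the pair becomes $V_\ell\,\grave{D}_\ell\grave{H}_\ell\grave{D}_\ell\,V_\ell^\top$ versus $\grave{D}_{\ell+1}$, and \cref{prop:AVBV} only needs the two \emph{separate} compactly supported limit spectral distributions (that of $D_\ell H_\ell D_\ell$ coming from the inductive freeness of $(H_\ell,D_\ell)$, that of $D_{\ell+1}$ from \cref{cor:limit-D}), with no independence or joint convergence between them — the monotone-rearrangement trick inside its proof supplies the missing joint limit. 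You do write ``\cref{prop:af} (or \cref{prop:AVBV})'' at the very end, but the choice is not optional: the argument goes through with \cref{prop:AVBV} and does not go through with \cref{prop:af} as you invoke it. Committing to \cref{prop:AVBV} and stating why it is the right tool would close the gap; the remaining bookkeeping you describe (measurability of $\grave{Y}_\ell,\grave{W}_{\ell+1},\grave{D}_\ell,\grave{H}_\ell,\grave{D}_{\ell+1}$ with respect to a $\sigma$-algebra independent of $V_\ell$, and the uniform $L^n$ bounds needed for \cref{lemma:P}) matches the paper.
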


\begin{proof}
We proceed by induction over $\ell$.
The case $\ell=1$ is trivial.
Assume that the assertion holds for an $\ell \geq 1$ and consider the case $\ell+1$.
Then by \eqref{align:hmltn_recurrence} and the assumption of induction, $\Hmltn_{\ell+1}$ has the limit spectral distribution.
Let $\grave{H}_{\ell+1}$ be the $N-1 \times N-1$ matrix determined by
\begin{align}
    P H_{\ell} P = \embedding{\grave{H}}_{\ell}.
\end{align}
By the same arguments as above, we only need to prove the asymptotic freeness of the following pair:
\begin{align}
  \grave{W}_{\ell+1} \grave{Y}_\ell V_\ell \grave{Y}_\ell^\top \grave{D}_\ell \grave{H}_\ell \grave{ D}_\ell (W_{\ell+1}\grave{Y}_\ell V_\ell \grave{Y}_\ell^\top )^\top, \grave{D}_{\ell+1}.
\end{align}
By \cref{lemma:UV}, considering the joint distributions of all entries, we only need to show the asymptotic freeness of the following pair:
\begin{align}
      V_\ell \grave{D}_\ell \grave{H}_\ell \grave{ D}_\ell V_\ell^\top, \grave{D}_{\ell+1}.
\end{align}
By the assumption, $\grave{D}_\ell \grave{H}_\ell \grave{ D}_\ell$ has the limit spectral distribution. Then by \cref{prop:AVBV}, the assertion holds for $\ell+1$. The assertion follows 
by induction.
\end{proof}
\section{Application}\label{sec:appl}
Let $\nu_\ell$ be the limit spectral distribution of $D_\ell^2$ for each $\ell$.
We introduce applications of the main results.

\subsection{Jacobian and Dynamical Isometry}
Let $J$ be the Jacobian of the network with respect to the input vector.
In \cite{Saxe2014exact, Pennington2017resurrecting, Pennington2018emergence}, a DNN is said to achieve dynamical isometry if $J$ acts as a near isometry, up to some overall global $O(1)$ scaling, on a subspace of as high a dimension as possible.
Calling $\tilde H$ such a subspace, the $||(J^\top J)_{|\tilde H}-Id_{\tilde H}||_2=o(\sqrt{dim \tilde H})$.
Note that in \cite{Saxe2014exact, Pennington2017resurrecting, Pennington2018emergence}, a rigorous definition is not given, and that many variants of this definition are likely to be acceptable for the theory.
%
In their theory, they take firstly the wide limit $N \to \infty$.
To examine the dynamical isometry as the wide limit $N \to \infty$ and the deep limit $L \to \infty$, \cite{Pennington2017resurrecting, Pennington2018emergence, Hayase2020spectrum} consider $S$-transform of the spectral distribution. (See \cite{Voiculescu1987multiplication,rao2007multiplication} for the definition of $S$-transform).

Now 
\begin{align}
    J = J_L= D_L W_L \dots  D_1 W_1.
\end{align}
Recall that the existence of the limit spectral distribution of each $J_\ell$ $(\ell=1, \dots, L)$ is supported by \cref{prop:input-jacobian}.  
\begin{cor}\label{cor:s-trans}
Let $\xi_\ell$ be the limit spectral distribution as $N \to \infty$ of $J_\ell J_\ell^\top$. Then for each $\ell=1, \dots, L$, it holds that
\begin{align}\label{align:s-trans}
   S_{\xi_\ell}(z) =  \frac{1}{\sigma_{w,1}^2 \dots \sigma_{w,\ell}^2} S_{\nu_1}(z) \cdots S_{\nu_\ell}(z).    
\end{align}
\end{cor}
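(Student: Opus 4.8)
The plan is to proceed by induction on $\ell$ using the multiplicativity of the $S$-transform under free multiplicative convolution. The base case $\ell=1$ is immediate: $J_1 J_1^\top = D_1 W_1 W_1^\top D_1 = \sigma_{w,1}^2 D_1^2$ since $W_1$ is $\sigma_{w,1}$ times a Haar orthogonal, so $\xi_1$ is the pushforward of $\nu_1$ under multiplication by $\sigma_{w,1}^2$, and by the scaling property $S_{c\mu}(z) = c^{-1} S_\mu(z)$ of the $S$-transform we get $S_{\xi_1}(z) = \sigma_{w,1}^{-2} S_{\nu_1}(z)$.

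For the inductive step, I would write $J_{\ell+1} J_{\ell+1}^\top = D_{\ell+1} W_{\ell+1} (J_\ell J_\ell^\top) W_{\ell+1}^\top D_{\ell+1}$. Since the spectral distribution is unchanged under cyclic rearrangement inside the trace, $J_{\ell+1} J_{\ell+1}^\top$ has the same limit spectral distribution as $W_{\ell+1} (J_\ell J_\ell^\top) W_{\ell+1}^\top \cdot D_{\ell+1}^2$, i.e. as $\sigma_{w,\ell+1}^2 \, \big(O_{\ell+1}(J_\ell J_\ell^\top)O_{\ell+1}^\top\big)\, D_{\ell+1}^2$ where $O_{\ell+1}$ is Haar orthogonal. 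By \cref{prop:input-jacobian}, $W_{\ell+1} J_\ell J_\ell^\top W_{\ell+1}^\top$ and $D_{\ell+1}^2$ are asymptotically free almost surely, and both have compactly supported limit spectral distributions (the former by \cref{prop:input-jacobian}, the latter by \cref{cor:limit-D}). The first has limit distribution $\xi_\ell$ dilated by $\sigma_{w,\ell+1}^2$, and the second has limit distribution $\nu_{\ell+1}$. Hence the limit spectral distribution $\xi_{\ell+1}$ of the product of these two asymptotically free positive elements is the free multiplicative convolution $\big(\text{dil}_{\sigma_{w,\ell+1}^2}\xi_\ell\big) \boxtimes \nu_{\ell+1}$.

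Now I would invoke the multiplicativity of the $S$-transform, $S_{\mu \boxtimes \nu} = S_\mu \cdot S_\nu$ (Voiculescu; see \cite{Voiculescu1987multiplication, rao2007multiplication}), together with the dilation rule, to obtain
\begin{align}
S_{\xi_{\ell+1}}(z) = S_{\text{dil}_{\sigma_{w,\ell+1}^2}\xi_\ell}(z)\, S_{\nu_{\ell+1}}(z) = \frac{1}{\sigma_{w,\ell+1}^2} S_{\xi_\ell}(z)\, S_{\nu_{\ell+1}}(z).
\end{align}
Substituting the inductive hypothesis $S_{\xi_\ell}(z) = \big(\sigma_{w,1}^2 \cdots \sigma_{w,\ell}^2\big)^{-1} S_{\nu_1}(z)\cdots S_{\nu_\ell}(z)$ gives the claimed formula \eqref{align:s-trans} for $\ell+1$, completing the induction.

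The only genuine subtlety I anticipate is a bookkeeping one: the $S$-transform of a measure with an atom at $0$ (or mean zero) is only defined under the condition that the mean is nonzero, so one should note that each $\nu_\ell$ and each $\xi_\ell$ has nonzero first moment — which holds because $(\act^\ell)'$ is not a.e. zero, so $\nu_\ell$ has strictly positive mean, and inductively the mean of $\xi_{\ell+1}$ is the product of positive means. Granting that, the argument is essentially a clean application of free multiplicative convolution and the scaling/multiplicativity properties of the $S$-transform, with \cref{prop:input-jacobian} doing all the real work of supplying asymptotic freeness.
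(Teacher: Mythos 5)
Your proof is correct and follows essentially the same route as the paper's: the same base case $J_1J_1^\top=\sigma_{w,1}^2D_1^2$, and the same inductive step combining the asymptotic freeness from \cref{prop:input-jacobian} with traciality and the multiplicativity/scaling of the $S$-transform to get $S_{\xi_{\ell+1}}=\sigma_{w,\ell+1}^{-2}S_{\xi_\ell}S_{\nu_{\ell+1}}$. Your added remark on the nonvanishing of the means (so that the $S$-transforms are well defined) is a sensible bit of bookkeeping that the paper leaves implicit.
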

\begin{proof}
Consider the case $\ell=1$. Then $J_1 J_1^\top = D_1 W_1 W_1^\top D_1= \sigma_{w,1}^2 D_1^2$. 
Then  $S_{\xi_\ell}(z) = \sigma_{w,1}^{-2} S_{\nu_1}(z)$.

Assume that \eqref{align:s-trans} holds for an $\ell \geq 1$. Consider the case $\ell+1$.
By \cref{prop:input-jacobian},  $W_{\ell+1}^\top W_{\ell+1} = \sigma_{w, \ell+1}^2 I$ and the tracial condition, 
\begin{align}
    S_{\xi_{\ell+1}}(z) =   \frac{1}{\sigma_{w,\ell+1}^2} S_{\xi_{\ell}}(z) S_{\nu_{\ell+1}}(z).
\end{align}
The assertion holds by induction.
\end{proof}

\cref{cor:s-trans} is a resolution of an unproven result in \cite{Pennington2018emergence}, and it enables us to compute the deep limit $S_{\xi_L}(z)$ as $L \to \infty$.

\subsection{Fisher Information Matrix and Training Dynamics}\label{ssec:fisher}

We focus on the \emph{the Fisher information matrix} (FIM) for supervised learning with a mean squared error (MSE) loss \cite{Pascanu2013revisiting, Pennington2018spectrum, Karakida2019normalization}. 
Let us summarize its definition and basic properties. 
Given $x \in \R^N$ and parameters $\theta=(W_1, \dots, W_\ell)$, we consider a Gaussian probability model
\begin{align}\label{align:gaussian-model}
p_\theta(y | x)  = \frac{1}{\sqrt{2\pi}}\exp\left(- \mathcal{L}\left(f_\theta(x) -y\right) \right) \  (y \in \R^N).
\end{align}
Now, the normalized MSE loss $\mathcal{L}$ is given by $\mathcal{L}(u) = ||u||_2^2/2N$,
for $u \in \R^N$, and $||\cdot||_2$ is the Euclidean norm.
In addition, consider  a probability density function $p(x)$ and
a joint density $p_\theta(x,y) = p_\theta(y|x)p(x)$.
Then, the FIM is defined by
\begin{align}\label{align:original-fim}
    \mathcal{I}(\theta) = \int  [\nabla_\theta \log p_\theta(x,y )^\top \nabla_\theta \log p_\theta(x,y)] p_\theta(x,y)dxdy,
\end{align}
which is an $LN^2 \times LN^2$ matrix.
As it is known in information geometry \cite{Amari2016}, the FIM works as a degenerate metric on the parameter space: the Kullback-Leibler divergence between the statistical model and itself perturbed by an infinitesimal shift $d\theta$ is given by
$ D_{\mathrm{KL}}(p_\theta || p_{\theta+ d\theta}) = d\theta^\top   \mathcal{I}(\theta) d\theta.$ 
More intuitive understanding is that we can write the Hessian of the loss as 
\begin{align}
 \left(\pp{}{\theta}\right)^2\E_{x,y}[\mathcal{L}(f_\theta(x) - y)]   =    \mathcal{I}(\theta) + \E_{x,y}[ (f_\theta(x) -y )^\top    \left( \pp{}{\theta} \right)^2  f_\theta(x) ].
\end{align}
Hence the FIM also characterizes the local geometry of the loss surface around a global minimum with a zero training error. In addition, we regard $p(x)$ as an empirical distribution of input samples and then the FIM is usually referred to as the empirical FIM \cite{Kunstner2019limitations,Pennington2018spectrum,Karakida2019normalization}.

The conditional FIM  is used \cite{Hayase2020spectrum} for the analysis of training dynamics of DNNs achieving dynamical isometry. 
Now, we denote by $\mc{I}(\theta | x)$ the  \emph{conditional FIM} (or FIM per sample) given a single input $x$ defined by 
\begin{align}
    \mc{I}(\theta | x)  =  \int [\nabla_\theta \log p_\theta(y | x)^\top \nabla_\theta \log p_\theta(y | x)] p_\theta(y|x)dy.
\end{align}
Clearly, $\int \mc{I}(\theta | x) p(x)dx = \mathcal{I}(\theta)$.
Since $p_\theta(y|x)$ is Gaussian, we have
\begin{align}\label{align:cfim}
 \mc{I}(\theta | x) = \frac{1}{N}J_\theta^\top J_\theta.
\end{align}
Now, in order to ignore $\mathcal{I}(\theta|x)$'s trivial eigenvalue zero,
consider a dual of $\mc{I}(\theta |x)$ given by
\begin{align}\label{align:H-simplfied}
     \mathcal{J}(x,\theta) = \frac{1}{N} J_\theta J_\theta^\top,
\end{align}
which is an $N \times N$ matrix.
Except for trivial zero eigenvalues, $\mathcal{I}(\theta|x)$ and $\mathcal{J}(x,\theta)$ share the same eigenvalues as follows:
\begin{align}\label{align:dual-formula}
    \mu_{I(\theta|x)} = \frac{LN^2 -N }{LN^2} \delta_0 + \frac{1}{L} \mu_{\mathcal{J}(x, \theta)},
\end{align}
where $\mu_A$ is the spectral distribution for a matrix $A$.
Now, for simplicity, consider the case bias parameters are zero.
Then it holds that
\begin{align}
    \mathcal{J}(x,\theta) = D_L H_L D_L,
\end{align}
where
\begin{align}
    H_L &=  \sum_{\ell=1}^L \hat{q}_{\ell-1}\delta_{L \to \ell} \delta_{L \to \ell}^\top,\\
    \hat{q}_\ell &= ||x^\ell||_2^2/N,\\
    \delta_{L \to \ell}  &= \pp{\pre^L}{\pre^\ell}.
\end{align}
Since $\delta_{L \to \ell} = W_LD_{L-1} \delta_{L-1 \to \ell}$ $(\ell < L)$, it holds that
\begin{align}\label{align:reccurence_matrix}
\Hmltn_{\ell +1 } =   \hat{q}_{\ell}I + W_{\ell+1} D_{\ell}\Hmltn_{\ell}D_{\ell}W_{\ell+1}^\top, 
\end{align}
where $I$ is the identity matrix.

\begin{cor}
Let $\mu_{\ell}$ be the limit spectral distribution as $N \to \infty$ of $H_\ell$ ($\ell=1, \dots, L$).
Set $q_\ell = \lim_{N \to \infty}\hat{q}_{\ell}$. Then for each $\ell=1, \dots, L$ it holds that 
\begin{align}\label{align:H}
\mu_{\ell+1} =  (q_\ell +  \sigma_{\ell+1}^2 \cdot )_* (\mu_\ell \boxtimes \nu_\ell ),
\end{align}
where $f_*\mu$  is the pushforward of a measure $\mu$ by a  measurable map $f$.
\end{cor}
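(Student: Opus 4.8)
The plan is to read the formula directly off the recurrence \eqref{align:hmltn_recurrence}, with \cref{prop:parameter-jacobian} supplying all the substance. Abbreviate $\sigma_{\ell+1}:=\sigma_{w,\ell+1}$ and fix $\ell\in\{1,\dots,L-1\}$. First I would note positivity and boundedness: $H_1=I_N\succeq 0$, and if $H_\ell\succeq 0$ then $D_\ell H_\ell D_\ell=(H_\ell^{1/2}D_\ell)^\top(H_\ell^{1/2}D_\ell)\succeq 0$ while $\hat q_\ell\geq 0$, so $H_{\ell+1}\succeq 0$ by \eqref{align:hmltn_recurrence}; moreover, since $\|D_\ell\|$ and $\|W_\ell\|$ are bounded and $\hat q_\ell$ converges a.s.\ (\cref{prop:signal}), $\sup_N\|H_\ell\|<\infty$ a.s.\ by induction on \eqref{align:hmltn_recurrence}. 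Hence $\mu_\ell$ and $\nu_\ell$ are compactly supported probability measures on $[0,\infty)$ (for $\nu_\ell$ use \cref{cor:limit-D}), and $\mu_\ell\boxtimes\nu_\ell$ is well defined.

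Next I analyze the non-scalar term of \eqref{align:hmltn_recurrence}. Since $W_{\ell+1}^\top W_{\ell+1}=\sigma_{\ell+1}^2 I_N$ and $\tr((XY)^m)=\tr((YX)^m)$, the matrix $W_{\ell+1}D_\ell H_\ell D_\ell W_{\ell+1}^\top$ has, for every $N$, the same empirical eigenvalue distribution as $\sigma_{\ell+1}^2\,H_\ell D_\ell^2$. By \cref{prop:parameter-jacobian}, $H_\ell$ and $D_\ell$—hence also $H_\ell$ and $D_\ell^2$—are asymptotically free almost surely; write $h\sim\mu_\ell$ and $d_2:=\lim_N D_\ell^2\sim\nu_\ell$ for the limits, which are free. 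Then $\tr((H_\ell D_\ell^2)^m)\to\tau((hd_2)^m)=\tau((h^{1/2}d_2 h^{1/2})^m)$ for every $m$, almost surely, the last equality being the swap $\tau((xy)^m)=\tau((yx)^m)$ inside the tracial von Neumann algebra (valid because $h\succeq 0$). By the definition of free multiplicative convolution, $\tau((h^{1/2}d_2h^{1/2})^m)$ is the $m$-th moment of $\mu_\ell\boxtimes\nu_\ell$; since all measures here are compactly supported, the spectral distribution of $W_{\ell+1}D_\ell H_\ell D_\ell W_{\ell+1}^\top$ converges a.s.\ to the pushforward of $\mu_\ell\boxtimes\nu_\ell$ under $t\mapsto\sigma_{\ell+1}^2 t$.

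It remains to restore the scalar term. Since $\hat q_\ell\to q_\ell$ almost surely (\cref{prop:signal}; here $q_\ell$ is, by definition, this limit), expanding $\tr\big((\hat q_\ell I_N+W_{\ell+1}D_\ell H_\ell D_\ell W_{\ell+1}^\top)^m\big)$ by the binomial theorem and letting $N\to\infty$ shows that the $m$-th moment of $\mu_{\ell+1}$ equals $\sum_{k=0}^m\binom{m}{k}q_\ell^{m-k}\,m_k\!\big((\sigma_{\ell+1}^2\,\cdot)_*(\mu_\ell\boxtimes\nu_\ell)\big)$, which is the $m$-th moment of $\big(q_\ell+\sigma_{\ell+1}^2\,\cdot\big)_*(\mu_\ell\boxtimes\nu_\ell)$. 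Equality of all moments between compactly supported measures gives \eqref{align:H}.

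I do not expect a genuine obstacle: \cref{prop:parameter-jacobian} does the real work and the rest is bookkeeping. The points needing care are (i) the intermediate products $H_\ell D_\ell^2$ and $hd_2$ are not self-adjoint, which the moment-preserving swap $xy\leftrightarrow yx$ circumvents; (ii) positivity and boundedness, so that $h^{1/2}$ and $\mu_\ell\boxtimes\nu_\ell$ make sense and moments determine the measures; and (iii) $\hat q_\ell$ is random, so one intersects the probability-one events of \cref{prop:parameter-jacobian} and \cref{prop:signal} before passing to the limit. To avoid $h^{1/2}$ one may instead conjugate $D_\ell H_\ell D_\ell$ by the diagonal sign matrix $\Sigma$ with $D_\ell=\Sigma|D_\ell|$, reduce to $|D_\ell|H_\ell|D_\ell|$, and approximate $|D_\ell|=(D_\ell^2)^{1/2}$ uniformly by polynomials in $D_\ell$ on the compact limiting spectrum; this is equivalent.
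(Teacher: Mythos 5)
Your argument is correct and is essentially the paper's proof spelled out in full: the paper simply notes that the corollary ``directly follows from \cref{prop:parameter-jacobian} and by induction,'' and your steps (orthogonal invariance and traciality to reduce to $\sigma_{w,\ell+1}^2 H_\ell D_\ell^2$, asymptotic freeness from \cref{prop:parameter-jacobian} giving the moments of $\mu_\ell\boxtimes\nu_\ell$, and the scalar shift $\hat q_\ell\to q_\ell$ handled by the binomial expansion with compact supports ensuring moment determinacy) are exactly the bookkeeping that proof leaves implicit.
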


\begin{proof}
The assertion directly follows from \cref{prop:parameter-jacobian} and by induction.
\end{proof}

\cite{Hayase2020spectrum} uses the recursive equation \eqref{align:H} to compute the maximum value of the limit spectrum of $H_L$.
\section{Discussion}\label{sec:discussion}
We have proved the asymptotic freeness of MLPs with Haar orthogonal initialization by focusing on the invariance of the MLP.
\cite{Hanin2019products} shows the asymptotic freeness of MLP with Gaussian initialization and ReLU activation.  The proof relies on the observation that each ReLU's derivative can be replaced with independent Bernoulli from weight matrices.
On the contrary, our proof builds on the observation that weight matrices are replaced with independent random matrices from activations' Jacobians based on Haar orthogonal random matrices' invariance.
In addition, \cite{Yang2019scaling, Yang2020tensor} proves the asymptotic freeness of MLP with Gaussian initialization, which relies on Gaussianity.
Since our proof relies on the orthogonal invariance of weight matrices, our proof covers and generalizes the GOE case.

It is straightforward to extend our results including \cref{thm:main}
to MLPs with Haar unitary weights since the proof basely relies on the invariance of weight matrices (see \cref{lemma:Ux}) and the cut off (see \cref{lemma:PWP}).
We expect that our theorem can be extended to Haar permutation weights since Haar distributed random permutation matrices and independent random matrices are asymptotic free \cite{collins2006integration}.
Moreover, we expect that it is possible to extend the principal results and cover MLPs with orthogonal/unitary/permutation invariant random weights since each proof is based on the invariance of MLP.

The neural tangent kernel theory \cite{Jacot2018NeuralNetworks} describes the learning dynamics of DNNs when the dimension of the last layer is relatively smaller than the hidden layers. In our analysis, we do not consider such a case and instead consider the case where the last layer has the same order dimension as the hidden layers. 

\section*{Acknowledgement}
BC was supported by JSPS KAKENHI 17K18734 and 17H04823.
The research  of TH was supprted by JST JPM-JAX190N.
This work was supported by Japan-France Integrated action Program
(SAKURA), Grant number JPJSBP120203202.
\bibliographystyle{abbrv} 
\bibliography{references.bib}
\clearpage

\numberwithin{equation}{section}
\appendix

\section{Review on Probability Theory}

\begin{thm}[Continuous Mapping Theorem]\cite[Theorem~3.2.4]{Durret2010probability}\label{thm:cmt}
Let $g$ be a measurable function and 
\begin{align}
    N_g = \{  x \in \R \mid  g \text{ \ is discontinuous at \ } x\}.
\end{align}
If $X_n \To X_\infty$ and $P(X_\infty \in N_g) = 0$ then 
\begin{align}
    g(X_n) \To g(X_\infty).
\end{align}
If besides $g$ is bounded, then 
\begin{align}
     \E[g(X_n)] \to \E[g(X_\infty)].
\end{align}
\end{thm}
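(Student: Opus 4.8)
The plan is to reduce the statement to an almost-sure convergence statement via Skorokhod's representation theorem, where the continuous-mapping property is transparent, and then read off both conclusions. First I would invoke Skorokhod's representation: since $X_n \To X_\infty$ as real-valued random variables, there is a single probability space carrying random variables $Y_n \sim X_n$ $(n \in \N)$ and $Y_\infty \sim X_\infty$ with $Y_n \to Y_\infty$ almost surely. The hypothesis $\mathbb{P}(X_\infty \in N_g)=0$ transfers to $\mathbb{P}(Y_\infty \in N_g)=0$, because $Y_\infty$ and $X_\infty$ have the same law and $N_g$ is a Borel set (the discontinuity set of a measurable function is Borel, being expressible through the oscillation function of $g$).

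Next I would push the map through. On the almost-sure event $\{Y_n \to Y_\infty\}\cap\{Y_\infty \notin N_g\}$, continuity of $g$ at the point $Y_\infty$ gives, by definition, $g(Y_n) \to g(Y_\infty)$. Hence $g(Y_n) \to g(Y_\infty)$ almost surely, and since almost-sure convergence implies convergence in distribution, $g(Y_n) \To g(Y_\infty)$. Because distributional convergence depends only on the laws, and $g(Y_n) \sim g(X_n)$, $g(Y_\infty) \sim g(X_\infty)$, this yields $g(X_n) \To g(X_\infty)$, which is the first assertion. For the second assertion, assume $g$ is bounded; then the sequence $g(Y_n)$ is uniformly bounded and converges almost surely to $g(Y_\infty)$, so the bounded convergence theorem gives $\E[g(Y_n)] \to \E[g(Y_\infty)]$, and rewriting in terms of the original variables yields $\E[g(X_n)] \to \E[g(X_\infty)]$.

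The only genuinely delicate point is the verification that $N_g$ is Borel, which is what makes the event $\{X_\infty \in N_g\}$ meaningful and lets its probability transfer across the equality of laws; this is handled once and for all by writing $N_g = \bigcup_{k} \{x : \omega_g(x) \geq 1/k\}$ with $\omega_g$ the oscillation of $g$, each such set being closed. Everything else is routine once Skorokhod's theorem is available. An alternative route, avoiding Skorokhod's representation, is to prove directly a portmanteau-type lemma: $X_n \To X_\infty$ implies $\E[f(X_n)] \to \E[f(X_\infty)]$ for every bounded $f$ whose discontinuity set is $X_\infty$-null, and then apply it with $f = h \circ g$ for bounded continuous $h$ (noting the discontinuity set of $h\circ g$ is contained in $N_g$); on that route the main obstacle becomes proving the strengthened portmanteau statement, which is essentially the same amount of work, so I would favour the Skorokhod argument above.
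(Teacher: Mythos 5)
Your proof is correct and complete: the Skorokhod representation, the observation that $N_g$ is Borel (an $F_\sigma$ set via the oscillation function), the pointwise continuity argument, and bounded convergence for the second claim are all in order. Note that the paper itself offers no proof of this statement — it is quoted verbatim from Durrett as a background result — and your argument is precisely the standard one given there (Durrett also deduces it from Skorokhod's representation theorem), so there is nothing to reconcile between the two.
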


\begin{thm}\label{thm:LFCLT} (Lindeberg-Feller Central Limit Theorem for Triangular Array)\cite[Theorem~3.4.10]{Durret2010probability}
	
    For each $n$, let $X_{n,m}, 1 \leq m \leq n$, be independent random variables with $\E[X_{n,m}] = 0$. Suppose
    \begin{enumerate}
        \item for $n \in \N$, $\sum_{m=1}^n \E[X_{n,m}^2] \to \sigma^2 > 0$,
        \item for all $\eps > 0$, 
        $\lim_{n \to \infty}\sum_{m=1}^n \E[X_{n,m}^2 ; \abs{X_{n,m} } > \eps  ]= 0.$
    \end{enumerate}
    Then $S_n:= X_{n,1} + \dots X_{n,n} \To \Normal(0, \sigma^2)$ as $n \to \infty$.
\end{thm}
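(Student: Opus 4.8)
The plan is to prove this by the classical method of characteristic functions, culminating in an appeal to L\'evy's continuity theorem. Write $\phi_{n,m}(t) = \E[e^{itX_{n,m}}]$ and $\sigma_{n,m}^2 = \E[X_{n,m}^2]$, so that by independence $\E[e^{itS_n}] = \prod_{m=1}^n \phi_{n,m}(t)$. It suffices to show that for every fixed $t \in \R$ this product converges to $e^{-\sigma^2 t^2/2}$, since the continuity theorem then yields $S_n \To \Normal(0,\sigma^2)$.

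The first preliminary step is to record the uniform smallness of the individual variances, namely $\max_{1\le m\le n}\sigma_{n,m}^2 \to 0$ as $n\to\infty$. Indeed, for any $\eps>0$ and any $m$,
\begin{align}
\sigma_{n,m}^2 \le \eps^2 + \E[X_{n,m}^2;\abs{X_{n,m}}>\eps] \le \eps^2 + \sum_{k=1}^n \E[X_{n,k}^2;\abs{X_{n,k}}>\eps],
\end{align}
so letting $n\to\infty$ via hypothesis (2) and then $\eps\to 0$ gives the claim; combined with hypothesis (1) this also bounds $\sup_n\sum_{m=1}^n\sigma_{n,m}^2$.

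The heart of the argument is a two-stage comparison. Using $\E[X_{n,m}]=0$ and the Taylor estimate $\abs{e^{iy}-1-iy+y^2/2}\le \min(\abs{y}^3,y^2)$, one obtains $\abs{\phi_{n,m}(t) - (1 - \tfrac12 t^2\sigma_{n,m}^2)} \le \E[\min(\abs{t}^3\abs{X_{n,m}}^3,\, t^2 X_{n,m}^2)]$; splitting the expectation according to whether $\abs{X_{n,m}}\le\eps$ and summing over $m$ gives
\begin{align}
\sum_{m=1}^n \left| \phi_{n,m}(t) - (1-\tfrac12 t^2\sigma_{n,m}^2) \right| \le \abs{t}^3\eps\sum_{m=1}^n\sigma_{n,m}^2 + t^2\sum_{m=1}^n\E[X_{n,m}^2;\abs{X_{n,m}}>\eps],
\end{align}
which tends to $0$ after $n\to\infty$ and then $\eps\to0$. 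Since $\abs{\phi_{n,m}(t)}\le 1$ and, for $n$ large, $1-\tfrac12 t^2\sigma_{n,m}^2\in[0,1]$ by the uniform-smallness step, the elementary product lemma $\abs{\prod z_m - \prod w_m}\le\sum\abs{z_m-w_m}$ (valid when all $\abs{z_m},\abs{w_m}\le1$) converts this into $\abs{\prod_m\phi_{n,m}(t) - \prod_m(1-\tfrac12 t^2\sigma_{n,m}^2)}\to0$. A second application, now comparing $1-\tfrac12 t^2\sigma_{n,m}^2$ with $e^{-\frac12 t^2\sigma_{n,m}^2}$ through $\abs{e^{-u}-(1-u)}\le u^2/2$, gives a discrepancy bounded by $\tfrac18 t^4(\max_m\sigma_{n,m}^2)(\sum_m\sigma_{n,m}^2)\to0$. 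Finally $\prod_m e^{-\frac12 t^2\sigma_{n,m}^2} = e^{-\frac12 t^2\sum_m\sigma_{n,m}^2}\to e^{-\frac12 t^2\sigma^2}$ by hypothesis (1), and chaining these limits shows $\E[e^{itS_n}]\to e^{-\sigma^2 t^2/2}$.

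The main obstacle here is bookkeeping rather than depth: one must establish $\max_m\sigma_{n,m}^2\to0$ \emph{before} the product comparisons are meaningful (otherwise the factors $1-\tfrac12 t^2\sigma_{n,m}^2$ need not lie in $[0,1]$ and the product lemma fails), and one must consistently take $n\to\infty$ first and $\eps\to0$ afterwards in every Lindeberg split. Everything else is the standard Taylor-expansion-and-product machinery, and the conclusion $S_n\To\Normal(0,\sigma^2)$ is then immediate from L\'evy's continuity theorem.
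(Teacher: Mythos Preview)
The paper does not supply its own proof of this theorem: it is stated in the appendix purely as a review item, with a citation to Durrett's textbook for the argument. Your proposal is the standard characteristic-function proof (Taylor-expand each $\phi_{n,m}$, use the Lindeberg split to control the error, compare products via the telescoping lemma, and invoke L\'evy's continuity theorem), which is essentially the proof one finds in the cited reference and is correct as written.
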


\begin{lemma}\label{lemma:clt-to-as}
	Let $X_{n,m}$ $(m=1, \dots,n )$ be i.i.d. random variables for each $n$. Assume that 
	\begin{align}
	X_{n,1} \To \mu, 
	\end{align}
	as $n \to \infty$
	for a probability measure  $\mu$.
	Then we have 
	\begin{align}
	\frac{1}{n} \sum_{m=1}^{n} \delta_{X_{n,m}} \To \mu, 
	\end{align}
	as $d \to \infty$ almost surely.
\end{lemma}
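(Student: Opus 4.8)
\textbf{Proof proposal for \cref{lemma:clt-to-as}.}

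The plan is to fix a countable convergence-determining family of bounded continuous test functions and reduce the almost-sure weak convergence of the empirical measures to a countable collection of almost-sure scalar limits, each of which follows from a strong law of large numbers for triangular arrays. Concretely, let $\m{f_k}_{k\in\N}$ be a countable family of bounded continuous functions on $\R$ such that $\rho_n \To \mu$ is equivalent to $\int f_k\,d\rho_n \to \int f_k\,d\mu$ for every $k$; such a family exists because $\R$ is Polish (one may take, e.g., a countable subset of $C_b(\R)$ that is convergence-determining, or work with a metric for the weak topology). It then suffices to show that for each fixed $k$,
\begin{align}
\frac{1}{n}\sum_{m=1}^n f_k(X_{n,m}) \to \int f_k \, d\mu
\end{align}
almost surely as $n\to\infty$, and then intersect the countably many probability-one events.

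For a fixed bounded continuous $g=f_k$ with $\|g\|_\infty = M$, set $Y_{n,m} = g(X_{n,m})$. For each $n$ these are i.i.d., bounded by $M$, and by hypothesis $X_{n,1}\To\mu$, so by the continuous mapping theorem (\cref{thm:cmt}, boundedness clause) $\E[Y_{n,1}] = \E[g(X_{n,1})] \to \int g\,d\mu =: c$. Thus it is enough to prove $\tfrac1n\sum_{m=1}^n (Y_{n,m} - \E[Y_{n,m}]) \to 0$ almost surely. Writing $S_n = \sum_{m=1}^n (Y_{n,m}-\E Y_{n,m})$, we have $\E[S_n] = 0$ and, using independence within each row together with the uniform bound $|Y_{n,m}-\E Y_{n,m}|\le 2M$, the fourth moment estimate
\begin{align}
\E[S_n^4] \le 3\Big(\sum_{m=1}^n \E[(Y_{n,m}-\E Y_{n,m})^2]\Big)^2 + \sum_{m=1}^n \E[(Y_{n,m}-\E Y_{n,m})^4] \le 3(4M^2 n)^2 + (2M)^4 n = O(n^2).
\end{align}
Hence $\E[(S_n/n)^4] = O(n^{-2})$, which is summable in $n$, so by Markov's inequality and the Borel--Cantelli lemma $S_n/n \to 0$ almost surely. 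This gives the claim for the fixed $g$.

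Finally, let $\Omega_k$ be the probability-one event on which $\tfrac1n\sum_{m=1}^n f_k(X_{n,m})\to\int f_k\,d\mu$, and set $\Omega_0 = \bigcap_{k\in\N}\Omega_k$, which has probability one. On $\Omega_0$, every $f_k$-integral of $\rho_n := \tfrac1n\sum_{m=1}^n \delta_{X_{n,m}}$ converges to the corresponding $\mu$-integral, and since $\{f_k\}$ is convergence-determining this forces $\rho_n \To \mu$. This proves the lemma. The main technical point — and the only place one must be a little careful — is the reduction to a \emph{countable} family of test functions: one cannot handle all of $C_b(\R)$ simultaneously via Borel--Cantelli, so one invokes separability of the weak topology on probability measures over a Polish space (or, equivalently, uses that weak convergence is metrizable and that a countable dense set of Lipschitz functions determines it). Everything else is a routine triangular-array strong law via a fourth-moment bound.
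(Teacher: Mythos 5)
Your proof is correct and follows essentially the same route as the paper's: for a fixed bounded continuous test function you center the row sums, bound the fourth moment by $O(n^2)$ using boundedness and within-row independence, and conclude via Markov's inequality and Borel--Cantelli that the averages converge almost surely. The only refinement is your explicit reduction to a countable convergence-determining family, which supplies the single probability-one event that the paper's quantification over all of $C_b(\R)$ leaves implicit; this is a tightening of the same argument rather than a different approach.
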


\begin{proof}
It suffices to show $\forall  f \in C_b(\R)$, 
	\begin{align}
	\frac{1}{n} \sum_{k=1}^n f(X_{n,m}) \to \int f(x) \mu(dx). 
	\end{align}
	as $n\to \infty $ almost surely.
	Set $\alpha_n := E[f(X_{n,m}) ]$ and $S_n := \sum_{k=1}^n (f(X_{n,m}) - \alpha_n )$.
	By assumption, we have 
	\begin{align}
	\lim_{n\to \infty}\alpha_{n}  = \int f(x)\mu(dx).
	\end{align}
	We claim that 
	\begin{align}
	\lim_{n \to \infty} \abs{S_n/ n } =  0  
	\end{align}
	almost surely.
	To show this, by Borel-Canteli's lemma, we only need to show that $\forall \epsilon >0$, 
	\begin{align}
	\sum_{n=1}^{\infty} P ( \abs{  S_n/n  } > \epsilon  ) < \infty.
	\end{align}
	Set $Y_n:=f(X_{n,m}) - \alpha_n$. Then 
	\begin{align}
	E[ S_n^4 ] =  E[Y_n ^4] - 3(n^2-n) E[Y_n^2]^2 < C n^2
	\end{align}
	for a constant $C >0$, because $f$ is bounded.
	Hence we have 
	\begin{align}
	P( \abs{  S_n^4/n^4 } > \epsilon^4 ) \leq \frac{1}{n^4 \epsilon^4}E[S_n^4] = \frac{C}{n^2 \epsilon^4}.  
	\end{align}
	Thus we have proved the assertion.
	
\end{proof}

\section{Review on Free Multiplicative Convolution}\label{sec:s-trans}
Let $(\mf{A}, \tau)$ be a NCPS. 
For $a \in \mf{A}$ with $\tau(a) \neq 0$, the \emph{S-transform} of $a$ is defined as  the formal power series
\begin{align}
	S_a(z) := \frac{1+z}{z}M_a^{<-1>}(z),
\end{align} 
where
\begin{align}
	M_a(z) := \sum_{n=1}^{\infty}\tau(a^n)z^n.
\end{align}
%
%
For example, given discrete distribution $\nu = \alpha\delta_0 + (1- \alpha)\delta_\gamma$ with $0\leq\alpha \leq 1$ and $\gamma>0$, we have
 $ S_\nu(z) = \gamma^{-1}(z+\alpha)^{-1}(z+1)$.

The relevance of S-transform in FPT is due to the following theorem of Voiculescu \cite{Voiculescu1987multiplication}:
  if $(a, b)$ is free with $\tau(a), \tau(b) \neq 0$ then 
\begin{align}
	S_{ab}(z) = S_{a}(z)S_b(z).
\end{align}
Further we assume that $a, b$ are self-adjoint and $b \geq 0$.
we define the multiplicative convolution 
\begin{align}
	\mu_a  \boxtimes \mu_b := \mu_{\sqrt{b}a \sqrt{b}}.   
\end{align}
Note that $ab$ will not self-adjoint.  But since $\tau$ is tracial, the distribution of  $ab$ is equal to  $\sqrt{b}a \sqrt{b}$.
Hence 
\begin{align}
 S_{\sqrt{b}a \sqrt{b} } = S_{ab} = S_a S_b.
\end{align}
Not that the S-transform can be defined in a more general situation \cite{rao2007multiplication}.

\end{document}